\documentclass{tlp}
\usepackage{aopmath}
\usepackage{algorithm}
\usepackage{amssymb}
\usepackage{algpseudocode}
\usepackage{tikz}
\usepackage{multicol}
\usepackage{subfig}
\usepackage{listings}
\usepackage{url}
\lstset{literate={~} {$\sim$}{1} }
\newtheorem{definition}{Definition} 
\newtheorem{example}{Example} 

\providecommand{\codeif}{\texttt{:- }}
\providecommand{\ruleend}{\texttt{.}}

\providecommand{\naf}[1]{\texttt{not }#1}
\providecommand{\asp}[1]{\mbox{$\mathtt{#1}$}}
\providecommand{\algname}{ILASP2i }

\begin{document}

\title{Iterative Learning of Answer Set Programs from Context Dependent Examples}
\shorttitle{Iterative learning of answer set programs from context-dependent examples}

\author[M. Law, A. Russo, K. Broda]{ Mark Law, Alessandra Russo\thanks{This
  research is partially funded by the EPSRC project EP/K033522/1 ``Privacy
Dynamics''.}, Krysia Broda \\ Department of Computing, Imperial College London,
SW7 2AZ\\ \email{$\lbrace$mark.law09, a.russo, k.broda$\rbrace$@imperial.ac.uk }}


\maketitle

\begin{abstract}

In recent years, several frameworks and systems have been proposed that extend
Inductive Logic Programming (ILP) to the Answer Set Programming (ASP) paradigm.
In ILP, examples must all be explained by a hypothesis together with a given
background knowledge. In existing systems, the background knowledge is the same
for all examples; however, examples may be context-dependent. This means that
some examples should be explained in the context of some information, whereas
others should be explained in different contexts. In this paper, we capture
this notion and present a context-dependent extension of the \emph{Learning
from Ordered Answer Sets} framework.  In this extension, contexts can be used
to further structure the background knowledge. We then propose a new iterative
algorithm, ILASP2i, which exploits this feature to scale up the existing ILASP2 system
to learning tasks with large numbers of examples.  We demonstrate the gain in
scalability by applying both algorithms to various learning tasks.  Our results
show that, compared to ILASP2, the newly proposed ILASP2i system can be two
orders of magnitude faster and use two orders of magnitude less memory, whilst
preserving the same average accuracy.
This paper is under consideration for acceptance in TPLP.

\end{abstract}
\begin{keywords}
Non-monotonic Inductive Logic Programming,
Answer Set Programming,
Iterative Learning
\end{keywords}

\section{Introduction}
\label{sec:intro}

Inductive Logic Programming~\cite{muggleton1991} (ILP) addresses the task of
learning a logic program, called a {\em hypothesis}, that explains a set of
examples using some background knowledge. Although ILP has traditionally
addressed learning (monotonic) definite logic programs, recently, several new
systems have been proposed for learning under the (non-monotonic) answer set
semantics (e.g.~\cite{ray2009nonmonotonic}, \cite{Corapi2012}, \cite{raspal},
\cite{JELIA_ILASP} and \cite{ICLP15}). Among these, ILASP2~\cite{ICLP15}
extended ILP to \emph{learning from ordered answer sets} ($ILP_{LOAS}$), a
computational task that learns answer set programs containing normal rules,
choice rules and both hard and weak constraints.

Common to all ILP systems is the underlying assumption that hypotheses should
cover the examples with respect to one fixed given background knowledge.  But,
in practice, some examples may be context-dependent -- different examples may
need to be covered using different background knowledges. For instance, within
the problem domain of urban mobility, the task of learning journey preferences
of people in a city may require a general background knowledge that describes
the different modes of transport available to a user (walk, drive, etc.), and
examples of which modes of transport users choose for particular journeys. In
this case, the context of an example would be the attributes (e.g. the
distance) of the journey. It is infeasible to assume that every possible
journey could be encoded in the background knowledge -- attributes, such as
journey distances, may take too many possible values.  But, encoding the
attributes of observed journeys as contexts of the observations restricts the
computation to those attribute values that are in the contexts.

In this paper, we present a generalisation of $ILP_{LOAS}$, called
\emph{context-dependent learning from ordered answer sets}
($ILP_{LOAS}^{context}$), which uses \emph{context-dependent} examples.  We
show that any $ILP_{LOAS}^{context}$ task can be translated into an
$ILP_{LOAS}$ task, and can therefore be solved by ILASP2. Furthermore, to
improve the scalability of ILASP2, we present a new iterative reformulation of
this learning algorithm, called  \emph{ILASP2i}.  This iterative approach
differs from existing non-monotonic learning systems, which tend to be batch
learners, meaning that they consider all examples at once. Non-monotonic
systems  cannot use a traditional cover loop
(e.g.,~\cite{muggleton1995inverse}), as examples that were covered in previous
iterations are not guaranteed to be covered in later iterations.  However,
ILASP2i iteratively computes a hypothesis by constructing a set of examples
that are \emph{relevant} to the search, without the need to consider all
examples at once.  Relevant examples are essentially counterexamples for the
hypotheses found in previous iterations.  This approach is a middle ground
between batch learning and the cover loop: it avoids using the whole set of
examples, but works in the non-monotonic case, as the relevant examples persist
through the iterations. We show that ILASP2i performs significantly better than
ILASP2 in solving learning from ordered answer set tasks with large numbers of
examples, and better still when learning with context-dependent examples, as in
each iteration it only considers the contexts of relevant examples, rather than
the full set.

To demonstrate the increase in scalability we compare ILASP2i to ILASP2 on a
variety of tasks from different problem domains. The results show that ILASP2i
is up to 2 orders of magnitude faster and uses up to 2 orders of magnitude less
memory than ILASP2. We have also applied both algorithms to the real-world
problem domain of urban mobility, and explored in greater depth the task of
learning a user's journey preferences from pairwise examples of which journeys
are preferred to others. As we learn ASP, these user preferences can very
naturally be represented as weak constraints, which give an ordering over the
journeys. Our results show that ILASP2i achieves an accuracy of at least $85\%$
with around 40 examples.  We also show that, by further extending
$ILP_{LOAS}^{context}$ with ordering examples that express equal preferences,
in addition to strict ordering, the accuracy can increase to $93\%$.

The rest of the paper is structured as follows. In Section~\ref{sec:background}
we review the relevant background. In Section~\ref{sec:approach} we present our
new context-dependent learning from ordered answer set task, and in
Section~\ref{sec:algorithm} we introduce our new ILASP2i algorithm. In
Section~\ref{sec:benchmarks} we compare ILASP2i to ILASP2 on a range of
different learning tasks and give a detailed evaluation of the accuracy of
\algname and compare its scalability with ILASP2 in the context of the journey
planning problem.  Finally, we conclude the paper with a discussion of related
and future work.

\section{Background}
\label{sec:background}


Let $\asp{h},\asp{h_1},\ldots,\asp{h_k},\asp{b_1},\ldots,\asp{b_n}$ be atoms
and $\asp{l}$ and $\asp{u}$ be integers.  The ASP programs we consider contain
normal rules, of the form $\asp{h \codeif b_1,\ldots, b_m, \naf b_{m+1},
\ldots, \naf b_n}$; constraints, which are rules of the form $\asp{\codeif
b_1,\ldots, b_m,\naf b_{m+1}, \ldots, \naf b_n}$; and choice rules, of the form
$\asp{l \lbrace h_1, \ldots, h_k\rbrace u \codeif b_1,\ldots, b_m,\naf b_{m+1},
\ldots, \naf b_n}$. We refer to the part of the rule before the ``$\texttt{:-}$''
as the head, and the part after the ``$\texttt{:-}$'' as the body. The meaning of
a rule is that if the body is true, then the head must be true. The empty head
of a constraint means \emph{false}, and constraints are used to rule out answer
sets. The head of a choice rule is true if between $\asp{l}$ and $\asp{u}$
atoms from $\asp{h_1,\ldots,h_k}$ are true.  The solutions of an ASP program
$P$ form a subset of the Herbrand models of $P$, called the \emph{answer sets}
of $P$ and denoted as $AS(P)$.

ASP also allows optimisation over the answer sets according to \emph{weak
constraints}, which are rules of the form $\asp{:\sim}$ $\asp{
b_1,}$ $\asp{ \ldots,}$ $\asp{ b_m,}$ $\asp{ \naf b_{m+1},}$ $\asp{ \ldots,}$
$\asp{\naf b_{n}\ruleend[w@p,}$ $\asp{t_1,}$ $\asp{\ldots,}$ $\asp{t_k]}$ where
$\asp{b_1,\ldots, b_n}$ are atoms called (collectively) the \emph{body} of the
rule, and $\asp{w,p, t_1\ldots t_k}$ are all terms with $\asp{w}$ called the
weight and $\asp{p}$ the priority level.  We will refer to
$\asp{[w@p,t_1,\ldots,t_k]}$ as the \emph{tail} of the weak constraint.  A
ground instance of a weak constraint $W$ is obtained by replacing all variables
in $W$ (including those in the tail of $W$) with ground terms.  In this paper,
it is assumed that all weights and levels of all ground instances of weak
constraints are integers.
 
Given a program $P$ and an interpretation $I$ we can construct the set of
tuples $\asp{(w, p, t_1,\ldots,t_k)}$ such that there is a ground instance of a
weak constraint in $P$ whose body is satisfied by $I$ and whose (ground) tail
is $\asp{[w@p, t_1, \ldots, t_k]}$.  At each level $\asp{p}$ the \emph{score}
of $I$ is
the sum of the weights of tuples with level $\asp{p}$. An
interpretation $I_1$ $\emph{dominates}$ another interpretation $I_2$ if there is
a level $\asp{p}$ for which $I_1$ has a lower score than $I_2$, and no level higher than
$\asp{p}$ for which the scores of $I_1$ and $I_2$ are unequal.  We write $I_1 \prec_{P} I_2$
to denote that given the weak constraints in $P$, $I_1$ dominates $I_2$.

\begin{example}\label{eg:weak}
  Consider the set $WS=$
  {\small
    $\left\{\begin{array}{l}
    \asp{:\sim mode(L, walk), crime\_rating(L, R), R > 3\ruleend[1@3, L, R]}\\
    \asp{:\sim mode(L, bus)\ruleend[1@2, L]}\\
    \asp{:\sim mode(L, walk), distance(L, D)\ruleend[D@1, L, D]}\\
  \end{array}\right.$
}\\

The first weak constraint in $WS$, at priority $\asp{3}$, means
``minimise the number of legs in our journey in which we have to walk through
an area with a crime rating higher than $\asp{3}$''. As this has the highest
priority, answer sets are evaluated over this weak constraint first. The
remaining weak constraints are considered only for those answer sets that have
an equal number of legs where we have to walk through an area with such a crime
rating. The second weak constraint means ``minimise the number of buses we have
to take'' (at priority $\asp{2}$). Finally, the last weak constraint means
``minimise the distance walked''. Note that this is the case because for each
leg where we have to walk, we pay the penalty of the distance of that leg (so
the total penalty at level $\asp{1}$ is the sum of the distances of the walking
legs).
\end{example}

We now briefly summarise the key properties of Learning from Ordered Answer
Sets and ILASP2, which we extend in this paper to Context-dependent Learning
from Ordered Answer Sets and ILASP2i.
%
It makes use of two types of examples: partial
interpretations and ordering examples. A partial interpretation $e$ is a pair
of sets of atoms $\langle e^{inc}, e^{exc}\rangle$. An answer set $A$
\emph{extends}
$e$ if $e^{inc} \subseteq A$ and $e^{exc} \cap A = \emptyset$.  An
ordering example is a pair of partial interpretations.
A program $P$ \emph{bravely} (resp.  \emph{cautiously})
\emph{respects} an ordering example $\langle e_1, e_2\rangle$ if for at least one
(resp. every) pair of answer sets $\langle A_1, A_2\rangle$ that extend $e_1$
and $e_2$, it is the case that $A_1 \prec_{P} A_2$.

\begin{definition}{\cite{ICLP15}}
\label{def:LOASTheory} 
A \emph{Learning from
  Ordered Answer Sets} ($ILP_{LOAS}$) task $T$ is a tuple $\langle B, S_M,
  E\rangle$ where $B$ is an ASP program, called the background knowledge,
  $S_{M}$ is the set of rules allowed in hypotheses (the hypothesis
  space) and $E$ is a tuple $\langle E^{+}, E^{-}, O^{b}, O^{c}\rangle$.
  $E^{+}$ and $E^{-}$ are finite sets of partial interpretations called,
  respectively, positive and negative examples. $O^{b}$ and $O^{c}$ are
  finite sets of ordering examples over $E^{+}$ called, respectively, brave and
  cautious orderings. A hypothesis $H$ is an inductive solution of $T$ (written
  $H\!\in\!ILP_{LOAS}(T)$) iff:
%
      $H \subseteq S_M$;
      $\forall e \in E^{+}$,  $\exists A \in AS(B\cup H)$ st $A$ extends $e$;
      $\forall e \in E^{-}$, $\nexists A \in AS(B\cup H)$ st $A$ extends
      $e$;
      $\forall o \in O^b$, $B\cup H$ bravely respects $o$; and,
      $\forall o \in O^c$, $B\cup H$ cautiously respects $o$.
\end{definition}

In~\cite{ICLP15}, we proposed a learning algorithm, called ILASP2, and proved
that it is sound and complete with respect to $ILP_{LOAS}$ tasks.  We use the
notation $ILASP2(\langle B, S_M, E\rangle)$ to denote a function that uses
ILASP2 to return an optimal (shortest in terms of number of literals) solution
of the task $\langle B, S_M, E\rangle$.  ILASP2 terminates for any task such
that $B \cup S_M$ grounds finitely (or equivalently, $\forall H \subseteq S_M$,
$B \cup H$ grounds finitely). We call any such task \emph{well defined}.

\section{Context-dependent Learning from Ordered Answer Sets}
\label{sec:approach}
In this section, we present an extension to the $ILP_{LOAS}$ framework called
\emph{Context-dependent Learning from Ordered Answer Sets} (written
$ILP_{LOAS}^{context}$). In this new learning framework, examples can be given
with an extra background knowledge called the \emph{context} of an example. The
idea is that each context only applies to a particular example, giving more
structure to the background knowledge.

\begin{definition}\label{def:context}
  A \emph{context-dependent partial interpretation} (CDPI) is a pair $\langle
  e, C\rangle$, where $e$ is a partial interpretation and $C$ is an ASP program
  with no weak constraints, called a \emph{context}.
  A \emph{context-dependent ordering example} (CDOE) $o$ is a pair of CDPIs,
  $\langle \langle e_1, C_1\rangle, \langle e_2, C_2\rangle\rangle$. A program
  $P$ is said to \emph{bravely} (resp. \emph{cautiously}) \emph{respect} $o$ if
  for at least one (resp. every) pair $\langle A_1, A_2\rangle$ such that $A_1
  \in AS(P \cup C_1)$, $A_2 \in AS(P \cup C_2)$, $A_1$ extends $e_1$ and $A_2$
  extends $e_2$, it is the case that $A_1 \prec_{P} A_2$.
\end{definition}

\begin{example}{Consider the programs $P  =  \left\{\asp{ coin(1\ruleend\ruleend 2)\ruleend\; 1
\lbrace val(C, h), val(C, t)\rbrace 1 \codeif\! coin(C)\ruleend}\right\}$,
  $C_1 = \left\{ \asp{val(1, V) \codeif val(2,
  V)\ruleend}\right\}$ and $C_2 =\left\{ \asp{\codeif val(1, V), val(2,
V)\ruleend} \right\}$.}
$AS(P \cup C_1) = \left\{ \lbrace \asp{val(1, h),}\right.$ $\asp{val(2,
h)}\rbrace,$ $\lbrace \asp{val(1, t),}$ $\left.\asp{val(2, t)}\rbrace \right\}$
and $AS(P \cup C_2) = \left\{ \lbrace \asp{val(1, h),}\right.$\break
  $\left.\asp{val(2, t)}\rbrace, \lbrace \asp{val(1, t),val(2, h)}\rbrace
\right\}$.  Also consider the CDOE $o = \langle \langle e_1, C_1\rangle,\langle
e_2, C_2\rangle\rangle$, where $e_1 = e_2 = \langle \emptyset, \emptyset\rangle$,
Let $W=\left\{\asp{ :\sim val(C, t)\ruleend[1@1, C]}\right\}$. $P \cup W$
bravely respects $o$ as $\lbrace \asp{val(1, h),val(2, h)}\rbrace$ is preferred
to $\lbrace \asp{val(1, h),val(2, t)}\rbrace$, but does not cautiously
respect $o$ as $\lbrace \asp{val(1, t),val(2, t)}\rbrace$ is not preferred to
$\lbrace \asp{val(1, h),val(2, t)}\rbrace$.  \end{example}

Examples with empty contexts are equivalent to examples in
$ILP_{LOAS}$. Note that contexts do not contain weak constraints. The operator
$\prec_{P}$ defines the ordering over two answer sets based on the
weak constraints in one program $P$. So, given a CDOE $\langle \langle e_1,
C_1\rangle, \langle e_2, C_2\rangle\rangle$, in which $C_1$ and $C_2$ contain
different weak constraints, it is not clear
whether the ordering should be checked using the weak constraints in $P$, $P
\cup C_1$, $P \cup C_2$ or $P \cup C_1 \cup C_2$. 
We now present 
the $ILP_{LOAS}^{context}$
framework. 

\begin{definition} \label{def:loas_context} A \emph{Context-dependent Learning
  from Ordered Answer Sets} ($ILP_{LOAS}^{context}$) task is a tuple $T=\langle
  B, S_M, E\rangle$ where $B$ is an ASP program called the background
  knowledge, $S_{M}$ is the set of rules allowed in the hypotheses (the
  hypothesis space) and $E$ is a tuple $\langle E^{+}, E^{-}, O^{b},
  O^{c}\rangle$ called the examples. $E^{+}$ and $E^{-}$ are
  finite sets of CDPIs called, respectively, positive and negative examples,
  and $O^{b}$ and $O^{c}$ are finite sets of CDOEs over $E^{+}$ called,
  respectively, brave and cautious orderings.  A hypothesis $H$ is an inductive
  solution of $T$ (written $H \in ILP_{LOAS}^{context}(T)$) if and only if:

  \begin{enumerate}
    \item
      $H \subseteq S_M$;
    \item
      $\forall \langle e, C\rangle \in E^{+}$,  $\exists A \in AS(B\cup C \cup H)$ st $A$ extends $e$;
    \item
      $\forall \langle e, C\rangle \in E^{-}$, $\nexists A \in AS(B\cup C \cup H)$ st $A$ extends $e$;
    \item
      $\forall o \in O^b$, $B\cup H$ bravely respects $o$; and finally,
    \item
      $\forall o \in O^c$, $B\cup H$ cautiously respects $o$.
  \end{enumerate}
\end{definition}


\noindent
In this paper we will say a hypothesis \emph{covers} an example iff
it satisfies the appropriate condition in (2)-(5); e.g.\ a brave CDOE is
covered iff it is bravely respected.

\begin{example}
  In general, it is not the case that an $ILP_{LOAS}^{context}$ task can be
  translated into an $ILP_{LOAS}$ task simply by moving all the contexts into
  the background knowledge ($B \cup C_1 \cup \ldots \cup C_n$ where
  $C_1,\ldots,C_n$ are the contexts of the examples). Consider, for instance,
  the $ILP_{LOAS}^{context}$ task $\langle B, S_M, \langle E^{+}, E^{-}, O^{b},
  O^{c}\rangle\rangle$ defined as follows:

  \begin{itemize}
  \item $B = \emptyset$.
    $E^{-} = \emptyset$. $O^{b} = \emptyset$. $O^{c} = \emptyset$
  \item $S_M = \lbrace \asp{go\_out\codeif raining\ruleend}\;\;\;
  \asp{go\_out\codeif \naf raining\ruleend}\rbrace$
  \item $E^{+} = \left\{
      \langle\langle \lbrace \asp{go\_out}\rbrace, \emptyset\rangle, \emptyset\rangle,\;\;
      \langle\langle \emptyset, \lbrace \asp{go\_out}\rbrace\rangle, \lbrace \asp{raining\ruleend}\rbrace\rangle
    \right\}$
  \end{itemize}

\noindent
  This task has one solution: $\asp{go\_out\codeif \naf raining\ruleend}$
  But, if we were to add all the contexts to the background knowledge, we would
  get a background knowledge containing the single fact $\asp{raining}$. So,
  there would be no way of explaining both examples, as every hypothesis would,
  in this case,  lead to a single answer set (either $\lbrace \asp{raining},
  \asp{go\_out}\rbrace$ or $\lbrace \asp{raining}\rbrace$), and therefore cover
  only one of the examples.

  To capture, instead, the meaning of context-dependent examples accurately, we
  could augment the background knowledge with the choice rule $\asp{0\lbrace
  raining \rbrace} 1$ and define the $ILP_{LOAS}$ examples as the pairs
%
      $\langle \lbrace \asp{go\_out}\rbrace, \lbrace \asp{raining}\rbrace\rangle$ and 
      $\langle \lbrace \asp{raining}\rbrace,$\break $\lbrace \asp{go\_out}\rbrace\rangle$. 
%
In this way, answer sets of the inductive solution would exclude
$\asp{go\_out}$ when $\asp{raining}$ (i.e., in the context of raining), and
include $\asp{go\_out}$ otherwise, which is the correct meaning of the given
context-dependent examples.
\end{example}

Definition~\ref{def:translation} gives a general translation of
$ILP_{LOAS}^{context}$ to $ILP_{LOAS}$, which enables the use of ILASP2 to
solve $ILP_{LOAS}^{context}$ tasks. The translation assumes that each
example $ex$ has a unique (constant) identifier, $\asp{ex_{id}}$, and that for
any CDPI $ex = \langle\langle e^{inc}, e^{exc}\rangle, C\rangle$, $c(ex)$ is
the partial interpretation $\langle e^{inc} \cup \lbrace
\asp{ctx(ex_{id})}\rbrace, e^{exc}\rangle$, where $\asp{ctx}$ is a new
predicate.
Also, for any program $P$ and any atom $\asp{a}$, $append(P, a)$ is the
program constructed by appending $\asp{a}$ to the body of every rule in $P$.

\begin{definition}\label{def:translation}
  For any $ILP_{LOAS}^{context}$ task $T = \langle B_{1}, S_{M},
  \langle E^{+}_{1}, E^{-}_{1}, O^{b}_{1}, O^{c}_{1}\rangle\rangle$,
  $\mathcal{T}_{LOAS}(T) = \langle B_{2}, S_{M},$\break $\langle E^{+}_{2}, E^{-}_{2},
  O^{b}_{2}, O^{c}_{2}\rangle\rangle$, where the components of $\mathcal{T}_{LOAS}(T)$
  are as follows:

  \begin{itemize} \item
      $B_{2} = B_{1} \cup \lbrace append(C, \asp{ctx(ex_{id})}) \mid ex =
      \langle e, C\rangle \in E^{+}_{1} \cup E^{-}_{1}\rbrace$\\
      $\mbox{\hspace{6mm}} \cup \left\{\asp{1 \lbrace ctx(id_{1}), \ldots,
          ctx(id_{n})\rbrace 1\ruleend}\middle| \lbrace id_{1}, \ldots,
          id_{n}\rbrace = \lbrace ex_{id} \mid ex \in E^{+}_{1} \cup
        E^{-}_{1}\rbrace\right\}$
    \item
      $E^{+}_{2} = \lbrace c(ex) \mid ex \in E^{+}_{1}\rbrace$;
      $E^{-}_{2} = \lbrace c(ex) \mid ex \in E^{-}_{1}\rbrace$
    \item
      ${O^{b}_{2}\!=\!\lbrace\langle c(ex_1),c(ex_2)\rangle\!\mid\!\langle
        ex_1, ex_2\rangle\!\in\!O^{b}_{1}\rbrace; O^{c}_{2}\!=\!\lbrace
        \langle c(ex_1),c(ex_2)\rangle\!\mid\!\langle ex_1,
      ex_2\rangle\!\in\!O^{c}_{1}\rbrace}$
 \end{itemize}
\end{definition}

We say that an $ILP_{LOAS}^{context}$ task $T$ is well defined if and only if
$\mathcal{T}_{LOAS}(T)$ is a well defined $ILP_{LOAS}$ task. Before proving
that this translation is correct, it is useful to introduce a lemma (which is
proven in \ref{sec:proofs}). Given a program $P$ and a set of contexts
$C_1,\ldots, C_n$, Lemma~\ref{lem:context} gives a way of combining the alternative
contexts into the same program. Each rule of each context $C_i$, is appended
with a new atom $\asp{a_i}$, unique to $C_i$, and a choice rule stating that
exactly one of the new $\asp{a_i}$ atoms is true in each answer set. This means
that the answer sets of $P\cup C_i$, for each $C_i$, are the answer sets of the
combined program that contain $\asp{a_i}$ (with the extra atom $\asp{a_i}$).

\begin{lemma}\label{lem:context}

  For any program $P$ (consisting of normal rules, choice rules and constraints)
  and any set of pairs $S = \lbrace \langle C_1, \asp{a_1}\rangle, \ldots,
  \langle C_n, \asp{a_n}\rangle\rbrace$ such that none of the atoms $\asp{a_i}$
  appear in $P$ (or in any of the $C$'s) and each $\asp{a_i}$ atom is unique:
  $AS(P \cup \left\{ \asp{1 \lbrace a_1, \ldots, a_n \rbrace 1\ruleend}\right\}
  \cup \left\{ append(C_i, \asp{a_i}) \middle| \langle C_i, \asp{a_i}\rangle \in
  S\right\})= \left\{ A \cup \lbrace \asp{a_i}\rbrace \middle| A \in AS(P \cup
  C_i), \langle C_i, \asp{a_i}\rangle \in S\right\}$
\end{lemma}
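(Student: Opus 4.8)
The plan is to prove the set equality by decomposing the combined program along the fresh atoms $a_1, \ldots, a_n$ via a splitting-set argument. Write $Q = P \cup \lbrace\asp{1 \lbrace a_1, \ldots, a_n \rbrace 1\ruleend}\rbrace \cup \lbrace append(C_i, a_i) \mid \langle C_i, a_i\rangle \in S\rbrace$ for the program on the left-hand side, and let $U = \lbrace a_1, \ldots, a_n\rbrace$. I claim $U$ is a splitting set for $Q$: by hypothesis no $a_i$ occurs in $P$ or in any $C_j$, and since $append$ only extends rule bodies, each $a_i$ occurs solely in the bodies of the rules of $append(C_i, a_i)$ and in the head of the single choice rule. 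Consequently, apart from degenerate empty-bodied constraints discussed below, the only rule all of whose atoms lie in $U$ is that choice rule, so the choice rule is the bottom program and $P$ together with all the $append(C_i, a_i)$ rules is the top.

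The answer sets of the bottom program are exactly the singletons $\lbrace a_1\rbrace, \ldots, \lbrace a_n\rbrace$. Fixing one of them, $X = \lbrace a_j\rbrace$, I would form the partial evaluation of the top program with respect to $X$. Each rule of $P$ is untouched, containing no atom of $U$. For $i \neq j$ the body of every rule of $append(C_i, a_i)$ contains the positive literal $a_i \notin X$, so all of these rules are deleted; for $i = j$ the body literal $a_j \in X$ is satisfied and stripped away, so $append(C_j, a_j)$ reduces precisely to $C_j$. The evaluated top is therefore exactly $P \cup C_j$. By the splitting-set theorem, $M \in AS(Q)$ if and only if $M = X \cup Y$ with $X = \lbrace a_j\rbrace$ for some $j$ and $Y \in AS(P \cup C_j)$; and because $a_j$ occurs in neither $P$ nor $C_j$, such a $Y$ is disjoint from $\lbrace a_j\rbrace$ and $M = Y \cup \lbrace a_j\rbrace$. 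Letting $j$ range over $1, \ldots, n$ yields precisely the right-hand side.

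The main obstacle is that the splitting-set theorem is classically stated for normal (and disjunctive) programs, whereas $P$ and the contexts may contain choice rules and constraints. I would handle this by first rewriting each choice rule into normal rules with fresh auxiliary atoms in the standard way; these auxiliaries can be chosen outside $U$ and outside the atoms of $P$ and the $C_i$, so the rewriting neither disturbs the splitting set nor alters any answer set, and constraints need no special treatment since they merely eliminate candidate models and have no head atoms to interfere with the split. The one degenerate case is a context $C_i$ containing an empty-bodied constraint: appending $a_i$ produces $\asp{\codeif a_i}$, which belongs to the bottom program and deletes $\lbrace a_i\rbrace$ from its answer sets --- but this is exactly the case in which $P \cup C_i$ is unsatisfiable, so both sides drop the same index and the equality is preserved. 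Alternatively, one can avoid splitting altogether and argue directly on the reduct: for any candidate $M$ the choice rule forces a unique $a_j \in M$, the rules $append(C_i, a_i)$ with $i \neq j$ are inert because $a_i \notin M$ falsifies their bodies, and $append(C_j, a_j)$ behaves relative to $M$ exactly as $C_j$, so that $M$ is a minimal model of the reduct of $Q$ iff $M \setminus \lbrace a_j\rbrace$ is a minimal model of the reduct of $P \cup C_j$; making this reduct correspondence precise, especially the way the reduced choice rule forces $a_j$ into $M$, is the step demanding the most care.
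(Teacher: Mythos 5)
Your proposal is correct and takes essentially the same route as the paper: the paper's proof is exactly a splitting-set argument with $U = \lbrace \asp{a_1}, \ldots, \asp{a_n}\rbrace$, the choice rule as the bottom program (with answer sets $\lbrace \asp{a_1}\rbrace, \ldots, \lbrace \asp{a_n}\rbrace$), and the partial evaluation of the top with respect to $\lbrace \asp{a_j}\rbrace$ collapsing to $P \cup C_j$. Your additional care about applying the splitting set theorem to programs with choice rules, and about the degenerate case of an empty-bodied constraint in a context (whose appended version $\asp{\codeif a_i}$ falls into the bottom program), addresses details the paper's one-line proof silently glosses over, and your handling of both is sound.
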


\begin{theorem}\label{thm:translation}
  For any $ILP_{LOAS}^{context}$ learning task $T$,
  $ILP_{LOAS}(\mathcal{T}_{LOAS}(T)) = ILP_{LOAS}^{context}(T)$.
\end{theorem}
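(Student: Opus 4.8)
The plan is to prove the set equality by fixing an arbitrary hypothesis $H \subseteq S_M$ and showing that $H \in ILP_{LOAS}^{context}(T)$ iff $H \in ILP_{LOAS}(\mathcal{T}_{LOAS}(T))$. Since both tasks share the hypothesis space $S_M$, condition (1) is literally the same in Definition~\ref{def:LOASTheory} and Definition~\ref{def:loas_context}, so it suffices to check that conditions (2)--(5) transfer. The single tool that does almost all of the work is Lemma~\ref{lem:context}, applied with $P = B_1 \cup H$ and the set of pairs $S = \lbrace \langle C, \asp{ctx(ex_{id})}\rangle \mid ex = \langle e, C\rangle \in E^{+}_{1} \cup E^{-}_{1} \rbrace$. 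Comparing $B_2$ from Definition~\ref{def:translation} with the program in the Lemma, $B_2 \cup H$ is exactly $P$ together with the cardinality-one choice rule and the appended contexts, so the Lemma yields $AS(B_2 \cup H) = \lbrace A \cup \lbrace \asp{ctx(ex_{id})}\rbrace \mid A \in AS(B_1 \cup C \cup H),\ ex = \langle e, C\rangle \in E^{+}_{1} \cup E^{-}_{1} \rbrace$. This step requires that $\asp{ctx}$ is genuinely fresh, i.e.\ it does not occur in $B_1$, in any context, or in $S_M$ (hence not in $H$); I would record this as a standing assumption justified by ``$\asp{ctx}$ is a new predicate''.

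The correspondence above is the crux: since the choice rule forces exactly one $\asp{ctx}$ atom into every answer set and each underlying $A$ contains none, every answer set of $B_2 \cup H$ carries a unique label $\asp{ctx(ex_{id})}$ identifying the context it came from. For the positive and negative conditions I would then argue that an answer set $A'$ of $B_2 \cup H$ extends $c(ex) = \langle e^{inc} \cup \lbrace \asp{ctx(ex_{id})}\rbrace, e^{exc}\rangle$ iff $\asp{ctx(ex_{id})} \in A'$ (so $A' = A \cup \lbrace \asp{ctx(ex_{id})}\rbrace$ for some $A \in AS(B_1 \cup C \cup H)$) and $A$ extends $e$. Hence ``$\exists A' \in AS(B_2 \cup H)$ extending $c(ex)$'' is equivalent to ``$\exists A \in AS(B_1 \cup C \cup H)$ extending $e$'', which is exactly condition (2); taking the negation gives the equivalence of the two formulations of condition (3).

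The orderings are where the argument needs care, and I expect this to be the main obstacle. The translated orderings $\langle c(ex_1), c(ex_2)\rangle$ are ordinary (context-free) ordering examples evaluated with respect to the weak constraints of $B_2 \cup H$, whereas the CDOEs of $T$ are evaluated with respect to those of $B \cup H$, so I must show that these two notions of domination agree. The key observation is that contexts contain no weak constraints (Definition~\ref{def:context}), and neither the choice rule nor the appended $\asp{ctx}$ atoms introduce any, so the weak constraints of $B_2 \cup H$ are precisely those of $B_1 \cup H$. Moreover, because $\asp{ctx}$ is fresh it appears in no weak-constraint body, so augmenting an answer set $A$ with $\asp{ctx(ex_{id})}$ leaves every per-level score unchanged. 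Therefore $A_1 \cup \lbrace \asp{ctx(id_1)}\rbrace \prec_{B_2 \cup H} A_2 \cup \lbrace \asp{ctx(id_2)}\rbrace$ iff $A_1 \prec_{B_1 \cup H} A_2$.

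Combining this with the answer-set correspondence, the set of pairs $\langle A_1', A_2'\rangle$ of answer sets of $B_2 \cup H$ with $A_1'$ extending $c(ex_1)$ and $A_2'$ extending $c(ex_2)$ is in a domination-preserving bijection with the set of pairs $\langle A_1, A_2\rangle$ such that $A_1 \in AS(B_1 \cup C_1 \cup H)$, $A_2 \in AS(B_1 \cup C_2 \cup H)$, $A_1$ extends $e_1$ and $A_2$ extends $e_2$. Quantifying this correspondence existentially gives the equivalence of brave respect (condition (4)) and universally gives that of cautious respect (condition (5)), matching Definition~\ref{def:context}. With all of (2)--(5) shown equivalent for an arbitrary $H \subseteq S_M$, the two solution sets coincide, establishing the theorem. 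The remaining routine points to verify are that the bijection is genuinely well defined on pairs (no $\asp{ctx}$ atom leaks between contexts, which follows from freshness together with the cardinality-one choice rule) and that the degenerate cases with no qualifying answer-set pairs are handled identically by both frameworks.
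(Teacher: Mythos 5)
Your proposal is correct and follows essentially the same route as the paper's proof: both reduce the theorem to the answer-set correspondence given by Lemma~\ref{lem:context} applied with $P = B_1 \cup H$ and labels $\asp{ctx(ex_{id})}$, and then transfer conditions (2)--(5) across that correspondence. The only difference is one of detail -- you explicitly verify what the paper leaves implicit in its ``by Lemma~\ref{lem:context}'' step, namely that the weak constraints of $B_2 \cup H$ coincide with those of $B_1 \cup H$ and that the fresh $\asp{ctx}$ atoms cannot alter any per-level score, so the domination relation is preserved under the bijection.
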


\begin{proof}

  Let $T = \langle B_{1}, S_M, \langle E^{+}_{1}, E^{-}_{1}, O^{b}_{1},
  O^{c}_{1}\rangle\rangle$ and $\mathcal{T}_{LOAS}(T) = \langle B_{2},
  S_M,\langle E^{+}_{2}, E^{-}_{2},O^{b}_{2}, O^{c}_{2}\rangle\rangle$.

  \leftskip=4mm

  \noindent $\mbox{\hspace{-5.5mm}}$
  $H \in ILP_{LOAS}^{context}(T)$
  $\Leftrightarrow H \subseteq S_M;$ $\forall \langle e, C\rangle \in E^{+}_{1}, \exists A \in AS(B_{1} \cup C \cup H)$ st $A$ extends $e$;
                                     $\forall \langle e, C\rangle \in E^{-}_{1}, \nexists A \in AS(B_{1} \cup C \cup H)$ st $A$ extends $e$;
                                     $\forall o \in O^{b}_{1}, B_{1} \cup H$ bravely respects $o$;
                                     $\forall o \in O^{c}_{1}, B_{1} \cup H$ cautiously respects $o$

  \noindent $\mbox{\hspace{-5.5mm}}$
  $\Leftrightarrow H \subseteq S_M;$ $\forall ex \in E^{+}_{1}, \exists A \in AS(B_{2} \cup H)$ st $A$ extends $c(ex)$;
                                     $\forall ex \in E^{-}_{1}, \nexists A \in AS(B_{2} \cup H)$ st $A$ extends $c(ex)$;
                                     $\forall \langle ex_1, ex_2 \rangle \in O^{b}, B_{2} \cup H$ bravely respects $\langle c(ex_1),$\break $c(ex_2)\rangle$;
                                     $\forall \langle ex_1, ex_2 \rangle \in O^{c}, B_{2} \cup H$ cautiously respects $\langle c(ex_1), c(ex_2)\rangle$ (by Lemma~\ref{lem:context})

  \noindent $\mbox{\hspace{-5.5mm}}$
  $\Leftrightarrow H \subseteq S_M;$ $\forall e \in E^{+}_{2}, \exists A \in AS(B_{2} \cup H)$ st $A$ extends $e$;
                                     $\forall e \in E^{-}_{2}, \nexists A \in AS(B_{2} \cup H)$ st $A$ extends $e$;
                                     $\forall o\in O^{b}, B_{2} \cup H$ bravely respects $o$;
                                     $\forall o\in O^{c}, B_{2} \cup H$ cautiously respects $o$
%
\end{proof}

\noindent
Theorem~\ref{thm:translation} shows that, by using an automatic
$\mathcal{T}_{LOAS}$ translation,
ILASP2 can be used to solve $ILP_{LOAS}^{context}$ tasks.  Although this means
that any $ILP_{LOAS}^{context}$ task can be translated to an $ILP_{LOAS}$ task,
context-dependent examples are useful for two reasons: firstly, they simplify
the representation of some learning tasks; and secondly, the added structure
gives more information about which parts of the background knowledge apply to
particular examples.  In Section~\ref{sec:algorithm} we present a new algorithm
that is able to take advantage of this extra information.

\begin{theorem}\label{thm:complexity}
  The complexity of deciding whether an $ILP_{LOAS}^{context}$ task is
  satisfiable is $\Sigma^{P}_{2}$-complete.
\end{theorem}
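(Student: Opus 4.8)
The plan is to establish $\Sigma^P_2$-completeness in the usual two parts: membership and hardness. For membership, I would describe a nondeterministic polynomial-time procedure equipped with an $\mathrm{NP}$ oracle that decides satisfiability. Since every hypothesis is a subset of $S_M$, and $\mathcal{T}_{LOAS}$ reduces the task to an ordinary $ILP_{LOAS}$ task in polynomial time (by Theorem~\ref{thm:translation} and the fact that the translation in Definition~\ref{def:translation} only adds linearly many rules and one choice rule), it suffices to bound the complexity of $ILP_{LOAS}$ itself. The procedure guesses a candidate $H \subseteq S_M$ (a polynomial-size object) and must then verify conditions (2)--(5). Each verification is a ground ASP reasoning task: checking brave consequence (does \emph{some} answer set extend $e$?) is in $\mathrm{NP}$ for the fragment we consider, checking cautious consequence (does \emph{no} answer set extend $e$?) is in $\mathrm{coNP}$, and checking the dominance conditions for brave and cautious orderings can be phrased as brave/cautious reasoning over $P \cup C_i$ augmented with the scoring comparison, all of which are decidable by a constant number of $\mathrm{NP}$-oracle calls. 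Hence the whole check is a polynomial number of queries to an $\mathrm{NP}$ oracle, placing the problem in $\mathrm{NP}^{\mathrm{NP}} = \Sigma^P_2$.

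For hardness, the plan is to reduce a known $\Sigma^P_2$-complete problem --- the natural candidate being $\exists\forall$-quantified Boolean satisfiability ($\mathrm{QSAT}_2$), deciding whether $\exists \vec{x}\, \forall \vec{y}\, \phi(\vec{x}, \vec{y})$ holds --- to deciding $ILP_{LOAS}^{context}$ satisfiability. The existential guess over $\vec{x}$ is naturally encoded by the choice of hypothesis $H \subseteq S_M$: I would let $S_M$ contain, for each existential variable $x_i$, a rule whose presence or absence in $H$ corresponds to setting $x_i$ to true or false. The universal quantifier over $\vec{y}$ together with the validity of $\phi$ is then encoded as a negative example (or a cautious-style condition): the background knowledge generates all truth assignments to $\vec{y}$ via choice rules, and a negative example is set up so that it is covered precisely when no assignment to $\vec{y}$ falsifies $\phi$ under the chosen $\vec{x}$. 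This aligns the $\exists H$ of the learning task with $\exists \vec{x}$ and the ``no bad answer set'' condition of the negative example with $\forall \vec{y}$.

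The main obstacle I anticipate is the hardness construction rather than membership. The subtlety is ensuring that the hypothesis choice encodes \emph{only} the intended existential assignment and nothing spurious --- I must prevent the optimisation/length minimisation and the interaction between multiple examples from admitting hypotheses that do not correspond to a well-formed truth assignment. Concretely, I would need guard examples (positive examples with carefully chosen contexts) forcing each $x_i$-rule to be either wholly present or wholly absent, so that the only viable hypotheses are in bijection with assignments to $\vec{x}$. A secondary delicate point is making the encoding of $\phi$ propagate correctly inside a single ASP program so that the existence of a ``counterexample'' answer set to the negative example faithfully mirrors $\exists \vec{y}\, \neg\phi$; here the context-dependent feature may actually simplify matters, since contexts let me attach the clause-checking machinery to the relevant example without polluting the orderings. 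Once the bijection between hypotheses and $\vec{x}$-assignments and the correspondence between example coverage and $\forall \vec{y}\,\phi$ are both pinned down, correctness of the reduction follows directly, and polynomial-time computability of the construction is immediate.
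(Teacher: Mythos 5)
Your overall two-part plan (membership plus hardness) is sound, but it takes a genuinely different and much more laborious route than the paper, and its weakest half is exactly the part the paper gets for free. The paper's entire proof consists of two polynomial reductions combined with the known $\Sigma^{P}_{2}$-completeness of $ILP_{LOAS}$ from \cite{ICLP15}: membership follows because $\mathcal{T}_{LOAS}$ (Theorem~\ref{thm:translation}) is a polynomial-time reduction of $ILP_{LOAS}^{context}$ to $ILP_{LOAS}$, and hardness follows because any $ILP_{LOAS}$ task \emph{is} an $ILP_{LOAS}^{context}$ task with empty contexts, so hardness transfers with no construction at all. Your membership argument starts the same way (via the translation) but then re-proves membership of $ILP_{LOAS}$ itself by guess-and-check with an $\mathrm{NP}$ oracle; this is correct (each of conditions (2)--(5) amounts to one $\mathrm{NP}$ or $\mathrm{coNP}$ query on a ground program, and answer-set checking for the normal/choice/constraint fragment is polynomial), just redundant given the cited result.

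For hardness, however, you re-derive from scratch what the empty-context embedding gives in one line, and your $\mathrm{QSAT}_2$ construction is left as a plan with its key difficulties acknowledged but unresolved (the bijection between hypotheses and assignments, and the faithfulness of the $\forall \vec{y}$ encoding). Note also that one of your anticipated obstacles is a red herring: satisfiability asks only whether \emph{some} inductive solution exists, so length minimisation and optimality play no role in the decision problem; moreover the bijection issue can be handled by putting one fact per existential variable in $S_M$ (presence meaning true, absence meaning false), so no guard examples are needed. The reduction could certainly be completed --- it is essentially how the hardness of $ILP_{LOAS}$ itself is established in the literature --- but as written it is a sketch, whereas noticing that $ILP_{LOAS}^{context}$ syntactically subsumes $ILP_{LOAS}$ closes the hardness direction immediately.
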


\noindent
Theorem~\ref{thm:complexity} (proven in \ref{sec:proofs}) implies that the
complexity of deciding the satisfiability of an $ILP_{LOAS}^{context}$ task is
the same as for an $ILP_{LOAS}$ task. Note that, similar to Theorem 2
in~\cite{ICLP15}, this result is for propositional tasks.

\section{Iterative Algorithm: ILASP2i}
\label{sec:algorithm}
In the previous section, we showed that our new $ILP_{LOAS}^{context}$ task can
be translated into $ILP_{LOAS}$, and therefore solved using the ILASP2
algorithm~\cite{ICLP15}.  However, ILASP2 may suffer from
scalability issues, due to the number of examples or the size and complexity of
the grounding of the hypothesis space, when combined with the background
knowledge.  In this paper, we address the first scalability issue by
introducing a new algorithm, ILASP2i, for solving (context-dependent) learning
from ordered answer sets tasks. The algorithm {\em iteratively} computes a
hypothesis by incrementally constructing a subset of the examples
that are {\em relevant} to the search. These are essentially counterexamples
for incorrect hypotheses. The idea of the algorithm is to incrementally build,
during the computation, a set of relevant examples and, at each iterative step,
to learn hypotheses with respect only to this set of relevant examples instead
of the full set of given examples.  Although we do not directly address the
second issue of large and complicated hypothesis spaces, it is worth noting
that by using the notion of context-dependent examples, the size of the
background knowledge (and therefore the grounding of the hypothesis space) in a
particular iteration of our algorithm may be much smaller. In fact, in
Section~\ref{sec:benchmarks} we show that the background knowledge of one
learning task (learning the definition of a Hamiltonian graph) can be
eliminated altogether by using contexts.

%
\begin{definition}\label{def:relevant}
  Consider an $ILP_{LOAS}^{context}$ learning task $T = \langle B, S_M, \langle
  E^{+}, E^{-}, O^{b}, O^{c}\rangle\rangle$ and a hypothesis $H \subseteq S_M$.
  A (context-dependent) example $ex$ is \emph{relevant} to $H$ given $T$ if $ex
  \in E^{+}\cup E^{-}\cup O^{b}\cup O^{c}$ and $B \cup H$ does not cover $ex$.
\end{definition}

The intuition of ILASP2i (Algorithm~\ref{alg:ILASP2i}) is that we start with an
empty set of \emph{relevant} examples and an empty hypothesis. At each step of
the search we look for an example which is relevant to our current hypothesis
(i.e. an example that $B \cup H$ does not cover).  If no such example exists,
then we return our current hypothesis as an optimal inductive solution;
otherwise, we add the example to our relevant set of examples and use ILASP2 to
compute a new hypothesis. 

The notation $<<$, in line 5 of algorithm~\ref{alg:ILASP2i}, means to add the
relevant example $re$ to the correct set in $Relevant$ (the first set if it is
a positive example etc).

\begin{algorithm}
  \begin{algorithmic}[1]
    \Procedure{\algname}{$\langle B, S_M, E \rangle$}
      \State{$Relevant = \langle \emptyset, \emptyset, \emptyset, \emptyset\rangle;\;\; H = \emptyset;$}
      \State{$re = findRelevantExample(\langle B, S_M, E \rangle, H);$}
      \While{$re \neq \mathtt{nil}$}
        \State{$Relevant << re;$}
        \State{$H = ILASP2(\mathcal{T}_{LOAS}(\langle B, S_M, Relevant\rangle));$}
        \State{\textbf{if}($H == \asp{nil}$)$\;\;$\textbf{return}$\;\;\asp{UNSATISFIABLE};$}
        \State{\textbf{else}$\;\;re = findRelevantExample(\langle B, S_M, E\rangle, H);$}
      \EndWhile
      \State{\textbf{return}\,$H;$}
    \EndProcedure
  \end{algorithmic}
  \caption{\algname \label{alg:ILASP2i}}
\end{algorithm}

The function $findRelevantExample(\langle B, S_M, E\rangle, H)$ returns a
(context-dependent) example in $E$ which is not covered by $B \cup H$, or
$\mathtt{nil}$ if no such example exists.  It works by encoding $B \cup H$ and
$E$ into a meta program whose answer sets can be used to determine which
examples in $E$ are covered. This meta program contains a choice rule, which
specifies that each answer set of the program tests the coverage of a single
CDPI or CDOE example. For a positive or negative example $ex = \langle e,
C\rangle$, if there is an answer set of the meta program corresponding to $ex$
then there must be at least one answer set of $B \cup C \cup H$ that extends
$e$. This means that positive (resp. negative) examples are covered iff there
is at least one (resp. no) answer set of the meta program that corresponds to
$ex$.  Similarly, CDOE's are encoded such that each brave (resp. cautious)
ordering $o$ is respected iff there is at least one (resp. no) answer set
corresponding to $o$.  $findRelevantExamples$ uses the answer sets of the
meta program to determine which examples are not covered.  Details of the meta
program are in~\ref{sec:meta}, including proof of its correctness.

It should be noted that in the worst case our set of relevant examples is
equal to the entire set of examples. In this case, \algname is
slower than ILASP2. In real settings, however, as examples are not carefully
constructed, there is likely to be overlap between examples, so the
relevant set will be much smaller than the whole set.\break
Theorem~\ref{thm:terminate} shows that \algname has the same condition for
termination as ILASP2.

\begin{theorem}\label{thm:terminate}
  \algname terminates for any well defined $ILP_{LOAS}^{context}$ task.
\end{theorem}

\noindent
Note that although the algorithm is sound, it is complete only in the sense
that it always returns an optimal solution if one exists (rather than returning
the full set).

\begin{theorem}\label{thm:sac}
  \algname is sound for any well defined $ILP_{LOAS}^{context}$ task, and
  returns an optimal solution if one exists.
\end{theorem}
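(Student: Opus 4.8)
The plan is to combine the correctness of the translation (Theorem~\ref{thm:translation}), the assumed soundness and optimality of the function $ILASP2$, and a simple monotonicity property of example coverage with respect to subsets of the example set. Throughout, I would rely on Theorem~\ref{thm:terminate} to guarantee that the while loop of Algorithm~\ref{alg:ILASP2i} eventually halts, so that one of the two exit branches is always reached.

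The first step is to record the loop invariant that $Relevant \subseteq E$ at every iteration: each example added on line~5 is returned by $findRelevantExample(\langle B, S_M, E\rangle, H)$ and hence belongs to $E$. From this I would derive the key monotonicity observation: because covering every example of $E$ trivially entails covering every example of any subset, any $H^\star \in ILP_{LOAS}^{context}(T)$ is also a solution of the reduced task $\langle B, S_M, Relevant\rangle$; that is, $ILP_{LOAS}^{context}(T) \subseteq ILP_{LOAS}^{context}(\langle B, S_M, Relevant\rangle)$ at all times. I would also note that $\mathcal{T}_{LOAS}$ leaves the hypothesis space $S_M$ unchanged, so hypothesis lengths, and therefore the notion of an optimal (shortest) solution, are preserved by the translation; combined with Theorem~\ref{thm:translation} this means that $ILASP2(\mathcal{T}_{LOAS}(\langle B, S_M, Relevant\rangle))$ returns an optimal element of $ILP_{LOAS}^{context}(\langle B, S_M, Relevant\rangle)$, or $\mathtt{nil}$ exactly when that set is empty.

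For soundness, I would argue on the branch where the algorithm returns a hypothesis $H$. The loop is exited only when $findRelevantExample$ returns $\mathtt{nil}$, which by Definition~\ref{def:relevant} means that $B \cup H$ covers every example in $E$; together with $H \subseteq S_M$ (guaranteed since $H$ was produced by $ILASP2$ over the space $S_M$), conditions (1)--(5) of Definition~\ref{def:loas_context} hold, so $H \in ILP_{LOAS}^{context}(T)$. For the $\asp{UNSATISFIABLE}$ branch, $ILASP2$ returned $\mathtt{nil}$, so the reduced task has no solution; by the monotonicity inclusion above, $T$ has no solution either, so the answer is correct, and conversely if $T$ is satisfiable this branch is never taken.

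For optimality, suppose $T$ is satisfiable; then by the previous paragraph the algorithm returns some $H \in ILP_{LOAS}^{context}(T)$ rather than $\asp{UNSATISFIABLE}$. This $H$ is an optimal (shortest) solution of the final reduced task. If some $H^\star \in ILP_{LOAS}^{context}(T)$ had $|H^\star| < |H|$, then by the monotonicity inclusion $H^\star$ would also solve the reduced task, contradicting the optimality of $H$ for that task; hence $H$ is optimal for $T$. The main obstacle is this last coupling of the two optimality notions: one must be sure that restricting the search to the relevant subset can never discard a globally shorter solution, which is exactly what the subset-monotonicity of coverage together with the length-preservation of $\mathcal{T}_{LOAS}$ secures.
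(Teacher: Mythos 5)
Your proposal is correct and follows essentially the same route as the paper's proof: soundness from $findRelevantExample$ returning \texttt{nil}, correctness of the $\asp{UNSATISFIABLE}$ branch from the fact that unsatisfiability of the reduced task over $Relevant \subseteq E$ implies unsatisfiability of the full task, and optimality by the same contradiction argument that a shorter global solution would also solve the reduced task. Your additional observation that $\mathcal{T}_{LOAS}$ preserves the hypothesis space and hence hypothesis lengths merely makes explicit a step the paper leaves implicit when invoking the optimality of ILASP2.
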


\noindent
Note that in Algorithm~\ref{alg:ILASP2i} the translation of a context-dependent
learning task is applied to the context-dependent task generated incrementally
at each step of the iteration (see line 6) instead of pre-translating the full
initial task. This has the advantage that the background knowledge of the
translated task only contains the contexts for the relevant examples, rather
than the full set.  In Section~\ref{sec:benchmarks} we compare the efficiency
of ILASP2i on $ILP_{LOAS}^{context}$ tasks that have been pre-translated with
corresponding tasks that have not been pre-translated, and demonstrate that in
the latter case ILASP2i can be up to one order of magnitude faster. We refer to
the application\break of ILASP2i with an automatic pre-translation to $ILP_{LOAS}$
as ILASP2i\_pt.

\section{Evaluation}
\label{sec:benchmarks}

In this section, we demonstrate the improvement in performance of ILASP2i over
ILASP2, both in terms of running time and memory usage.
Although there are benchmarks for ASP solvers, as ILP systems for ASP are
relatively new, and solve different computational tasks, there are no
benchmarks for learning ASP programs. We therefore investigate new problems.
To demonstrate the increased performance of ILASP2i over ILASP2, we chose tasks
with large numbers of examples.
We compare the algorithms in four problem settings, each including tasks
requiring different components of the $ILP_{LOAS}^{context}$ framework.  We
also investigate how the performance and accuracy vary with the number of
examples, for the task of learning user journey preferences.
All learning tasks were run with ILASP2, ILASP2i and ILASP2i\_pt\footnote{For
details of the tasks discussed in this section and how to download and run
ILASP2, ILASP2i and ILASP2i\_pt, see
\url{https://www.doc.ic.ac.uk/~ml1909/ILASP}.}.


\begin{table}[h]
    \resizebox{\textwidth}{!}{%
  \begin{tabular}{p{0.16\textwidth}p{0.025\textwidth}p{0.025\textwidth}p{0.025\textwidth}
      p{0.04\textwidth}
      p{0.05\textwidth}
      p{0.05\textwidth}
      p{0.055\textwidth}
      p{0.07\textwidth}
      p{0.07\textwidth}
    p{0.07\textwidth}}
  \hline
  \hline
  Learning & \multicolumn{4}{c}{\#examples} &\multicolumn{3}{c}{time/s} & \multicolumn{3}{c}{Memory/kB}\\
  task & $E^{+}$ & $E^{-}$ & $O^{b}$ & $O^{c}$ & 2 & 2i\_pt & 2i & 2 & 2i\_pt & 2i\\\hline

  Hamilton A                     & 100 & 100 & 0   & 0   & 10.3 & 4.2 & 4.3 & 9.7$\times 10^{4}$ & 1.2$\times 10^{4}$ & 1.2$\times 10^{4}$ \\
  Hamilton B                     & 100 & 100 & 0   & 0   & 32.0 & 84.9 & 3.6 & 3.6$\times 10^{5}$ & 2.7$\times 10^{5}$ & 1.4$\times 10^{4}$ \\
  Scheduling A                   & 400 & 0   & 110 & 90  & 291.9 & 64.2 & 63.4 & 2.7$\times 10^{6}$ & 1.7$\times 10^{5}$ & 1.7$\times 10^{5}$ \\
  Scheduling B                   & 400 & 0   & 128 & 72  & 347.2 & 40.1 & 40.3 & 5.2$\times 10^{6}$ & 2.6$\times 10^{5}$ & 2.6$\times 10^{5}$ \\
  Scheduling C                   & 400 & 0   & 133  & 67 & 1141.8 & 123.6 & 124.2 & 8.4$\times 10^{6}$ & 4.9$\times 10^{5}$ & 5.0$\times 10^{5}$ \\
  Agent A                        & 200 & 0   & 0   & 0   & 444.5 & 56.7 & 39.1 & 4.7$\times 10^{6}$ & 3.7$\times 10^{5}$ & 9.8$\times 10^{4}$ \\
  Agent B                        & 50  & 0   & 0   & 0   & TO & 212.3 & 9.4 & TO & 1.1$\times 10^{6}$ & 1.8$\times 10^{5}$ \\
  Agent C                        & 80  & 120 & 0   & 0   & 808.7 & 132.3 & 60.1 & 2.9$\times 10^{6}$ & 3.5$\times 10^{5}$ & 8.4$\times 10^{4}$ \\
  Agent D                        & 172 & 228 & 390 & 0   & OOM & 863.3 & 408.4 & OOM & 2.4$\times 10^{6}$ & 8.0$\times 10^{5}$ \\\hline\hline

\end{tabular}
}

\caption{The running times of ILASP2, ILASP2i and ILASP2i\_pt. TO stands for
  time out (6 hours) and OOM stands for out of memory.\label{tbl:benchmarks}}

\end{table}

Our first problem setting is learning the definition of whether a graph is
Hamiltonian or not (i.e. whether it contains a Hamilton cycle). Hamilton A is
an $ILP_{LOAS}$ (non context-dependent) task. The background knowledge $B$
consists of the two choice rules $\asp{1}$ $\asp{\lbrace}$ $\asp{node(1),}$
$\asp{node(2),}$ $\asp{node(3),}$ $\asp{node(4)}$ $\asp{\rbrace 4}\,$ and
$\,\asp{0}$ $\asp{\lbrace}$ $\asp{edge(N1, N2)}$ $\asp{\rbrace}$ $\asp{1}$
$\asp{\codeif}$\break $\asp{node(N1), node(N2)}$, meaning that the answer sets
of $B$ correspond to
the graphs of size 1 to 4. Each example then corresponds to exactly one graph,
by specifying which $\asp{node}$ and $\asp{edge}$ atoms should be true.
Positive examples correspond to Hamiltonian graphs, and negative examples
correspond to non-Hamiltonian graphs. Hamilton B is an $ILP_{LOAS}^{context}$
encoding of the same problem. The background knowledge is empty, and each
example has a context consisting of the $\asp{node}$ and $\asp{edge}$ atoms
representing a single graph. ILASP2i performs significantly better than ILASP2
in both cases. Although ILASP2i is slightly faster at solving Hamilton B
compared with Hamilton A, one interesting result is that ILASP2 and
ILASP2i\_pt perform better on Hamilton A. This is because the non
context-dependent encoding in Hamilton A is more efficient than the
automatic translation (using definition~\ref{def:translation}) of Hamilton B.

To test how the size of the contexts affects the performance of the three
algorithms, we reran the Hamilton A and B experiments with the maximum size of
the graphs varying from 4 to 10. Each experiment was run 100 times with
randomly generated sets of positive and negative examples (100 of each in each
experiment). The results (figure~\ref{fig:hamilton}) show that ILASP2i performs
best in both cases - interestingly, on average, there is no difference between
Hamilton A (non context-dependent) and Hamilton B (context-dependent) at first,
but as the maximum graph size increases, the domain of the background knowledge
in Hamilton A increases and so ILASP2i performs better on Hamilton B.
Although ILASP2i\_pt is much slower on Hamilton B than Hamilton A, it uses
significantly less memory on the former. As the performance of ILASP2i and
ILASP2i\_pt is the same on any non context-dependent task, we do not show the
results for ILASP2i\_pt on Hamilton A.

\begin{figure*}
  \begin{multicols}{2}
    \includegraphics[width=0.7\textwidth,height=0.38\textwidth, trim={80mm 1mm 20mm 6mm},clip]{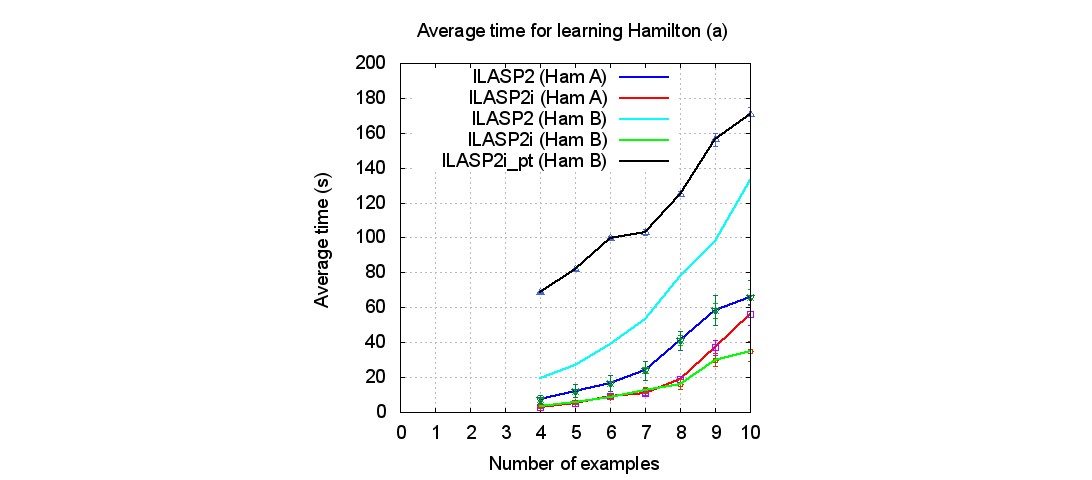}

    \includegraphics[width=0.7\textwidth,height=0.38\textwidth, trim={75mm 1mm 20mm 6mm},clip]{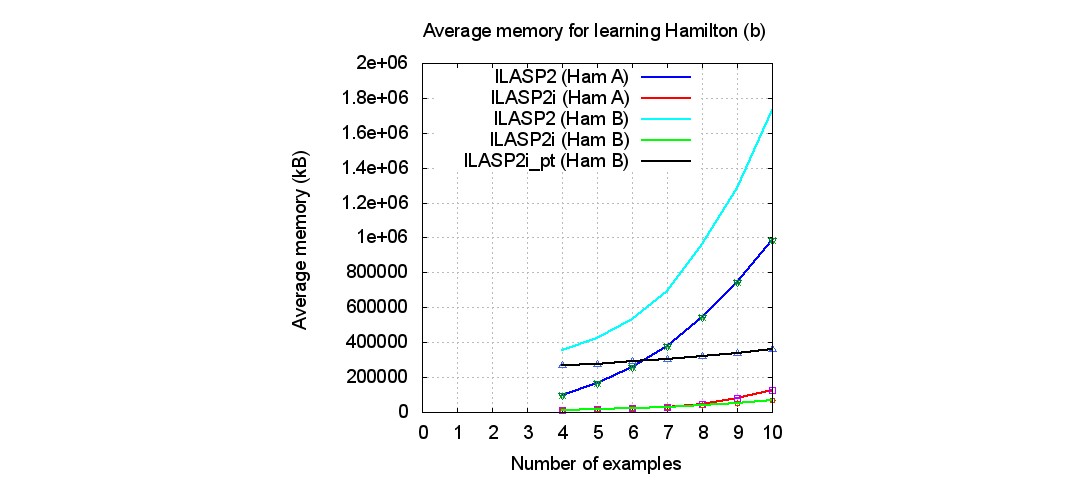}

  \end{multicols}

    \caption{\label{fig:hamilton} (a) the average computation time and (b) the memory usage of ILASP2, ILASP2i and ILASP2i\_pt for Hamilton A and B.}

\end{figure*}

We also reconsider the problem of learning scheduling preferences, first
presented in~\cite{ICLP15}. In this setting, the goal is to learn an academic's
preferences about interview scheduling, encoded as weak constraints. Tasks A-C
in this case are over examples with \texttt{3x3}, \texttt{4x3} and \texttt{5x3}
timetables, respectively. As this setting contains no contexts for the
examples, the performance of ILASP2i and ILASP2i\_pt are relatively similar;
however, for larger timetables both are over an order of magnitude faster and
use over an order of magnitude less memory than ILASP2. Interestingly, although
ILASP2i does not directly attempt to scale up the size of possible problem
domains (in this case, the dimensions of the timetables), this experiment
demonstrates that ILASP2i does (indirectly) improve the performance on larger
problem domains. One unexpected observation is that ILASP2i runs faster on task
B than task A. This is caused by the algorithm choosing ``better'' relevant
examples for task B, and therefore needing a smaller set of relevant examples.
On average, the time for \texttt{4x3} timetables would be expected to be higher
than the \texttt{3x3}'s.

Our third setting is taken from~\cite{JELIA_ILASP} and is based on an agent
learning the rules of how it is allowed to move within a grid. Agent A requires
a hypothesis describing the concept of which moves are valid, given a history
of where an agent has been. Agent B requires a similar hypothesis to be
learned, but with the added complexity that an additional concept is required
to be invented. While Agent A and Agent B are similar to scenarios 1 and 2
in~\cite{JELIA_ILASP}, the key difference is that different examples contain
different histories of where the agent has been. These histories are encoded as
contexts, whereas in~\cite{JELIA_ILASP}, one single history was encoded in the
background knowledge. There are also many more examples in these experiments.
In Agent C, the hypothesis from Agent A must be learned along with a constraint
ruling out histories in which the agent visits a cell twice (not changing the
definition of valid move). This requires negative examples to be given, in
addition to positive examples. In Agent D, weak constraints must be learned to
explain why some traces through the grid are preferred to others. This uses
positive, negative and brave ordering examples. In each case, ILASP2i performs
significantly better than ILASP2i\_pt, which performs significantly better
than\break ILASP2 (ILASP2 times out in one experiment, and runs out of memory
in another).

In our final setting, we investigate the problem of learning a user's
preferences over alternative journeys, in order to demonstrate how the
performance of the three algorithms varies with the number of examples. We also
investigate how the accuracy of ILASP2i varies with the number of examples. In
this scenario, a user makes requests to a journey planner to get from one
location to another. The user then chooses a journey from the 
alternatives returned by the planner. A journey consists of one or more
legs, in each of which the user uses a single mode of transport.

We used a simulation environment \cite{poxrucker2014} to generate realistic
examples of journeys.  In our experiment, we ran the simulator for one
(simulated) day to generate a set of journey requests, along with the
attributes of each possible journey.  The attributes provided by the simulation
data are: $\asp{mode}$, which takes the value $\asp{bus}$, $\asp{car}$,
$\asp{walk}$ or $\asp{bicycle}$; $\asp{distance}$, which takes an integer value
between $\asp{1}$ and $\asp{20000}$; and $\asp{crime\_rating}$.  As the crime
ratings were not readily available from the simulator, we used a randomly
generated value between $\asp{1}$ and $\asp{5}$.

\begin{figure*}

    \noindent
    \includegraphics[width=1\textwidth,height=0.38\textwidth, trim={25mm 1mm 0mm 5mm},clip]{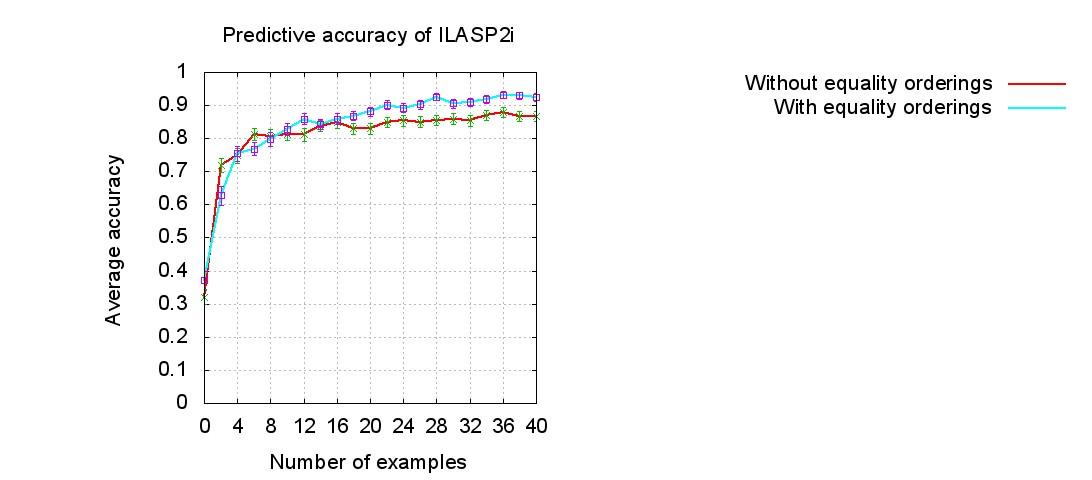}
    \caption{\label{fig:accuracy} average accuracy of ILASP2i}
\end{figure*}

For our experiments, we assume that the user's preferences can be represented
by a set of weak constraints based on the attributes of a leg. We constructed a
set of possible weak constraints, each including at most 3 literals. Most of
these literals capture the leg's attributes, e.g., $\asp{mode(L, bus)}$ or
$\asp{crime\_rating(L, R)}$ (if the attribute's values range over integers this
is represented by a variable, otherwise each possible value is used as a
constant). For the crime rating ($\asp{crime\_rating(L,R)}$), we also allow
comparisons of the form $R > \asp{c}$ where $\asp{c}$ is an integer from 1 to
4. The weight of each weak constraint is a variable representing the distance of
the leg in the rule, or 1 and the priority is 1, 2 or 3. One possible set of
preferences is the set of weak constraints in Example~\ref{eg:weak}.
$S_J$ denotes the set of possible weak constraints.

We now describe how to represent the journey preferences scenario in\break
$ILP_{LOAS}^{context}$.  We assume a journey is encoded as a set of attributes
of the legs of the journey; for example the journey $\lbrace
\asp{distance(leg(1), 2000),}$ $\asp{ distance(leg(2), 100),}$\break $\asp{mode(leg(1), bus),
mode(leg(2), walk)}\rbrace$ has two legs; in the first leg, the person must
take a bus for 2000m and in the second, he/she must walk 100m. Given a
set of such journeys $J = \lbrace j_1,\ldots,j_n\rbrace$ and a partial ordering
$O$ over $J$,
  $\mathcal{M}(J, O, S_J)$ is the $ILP_{LOAS}^{context}$ task $\langle \emptyset, S_J,
  E^{+}, \emptyset, O^{b}, \emptyset\rangle$, where
  $E^{+}=\lbrace \langle\langle\emptyset,\emptyset\rangle, j_i\rangle \mid j_i
  \in J\rbrace$ and $O^{b}=$\break $\lbrace \langle
  \langle\langle\emptyset,\emptyset\rangle, j_1\rangle,
  \langle\langle\emptyset,\emptyset\rangle, j_2\rangle\rangle \mid \langle j_1,
  j_2 \rangle \in O\rbrace$.
Each solution of $\mathcal{M}(J, O, S_J)$ is a set of weak constraints
representing preferences which explain the ordering of the journeys.
Note that the positive examples are automatically satisfied as the (empty)
background knowledge (combined with the context) already covers them.  Also, as
the background knowledge together with each context has exactly one answer set,
the notions of brave and cautious orderings coincide; hence, we do not need
cautious ordering examples for this task. Furthermore, since we are only
learning weak constraints, and not hard constraints, the task also has no
negative examples (a negative example would correspond to an invalid journey).

In each experiment we randomly selected 100 test hypotheses, each consisting of
between 1 and 3 weak constraints from $S_J$. For each test hypothesis $H_T$, we
then used the simulated journeys to generate a set of ordering examples
$\langle j_1, j_2\rangle$ such that $j_1$ was one of the optimal journeys,
given $H$, and $j_2$ was an non-optimal alternative to $j_1$. We
then tested the algorithms on tasks with varying numbers of ordering examples
by taking a random sample of the complete set of ordering examples.

\begin{figure*}
    \includegraphics[width=1\textwidth,height=0.4\textwidth, trim={10mm 0mm 0mm 0mm},clip]{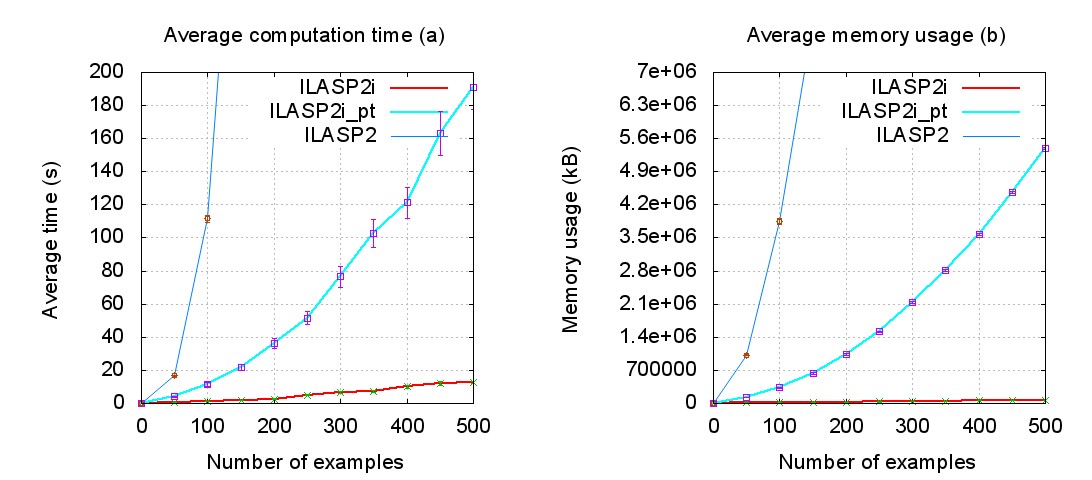}

    \caption{\label{fig:results} (a) the average computation time and (b) the memory usage of ILASP2, ILASP2i and ILASP2i\_pt for learning journey preferences.}

\end{figure*}

The accuracy of \algname for different numbers of examples is shown in
Figure~\ref{fig:accuracy}. The average accuracy converges to around $85\%$
after roughly 20 examples. As we only gave examples of journeys such that one
was preferred to the other the hypotheses were often incorrect at predicting
that two journeys were equal. We therefore introduced a new type of brave
ordering example to ILASP2i, which enables us to specify that two answer sets
should be equally optimal. We ran the same experiment with half of the ordering
examples as the new ``equality'' orderings. The average accuracy increased to
around $93\%$ after 40 examples.  Note that as ILASP2 and \algname return an
arbitrary optimal solution of a task, their accuracy results, on average, are
the same. We therefore only present the results for ILASP2i.

Figures~\ref{fig:results}(a) and (b) show the running times and memory usage
(respectively) for up to 500 examples for ILASP2, ILASP2i and ILASP2i\_pt.  For
experiments with more than 200 examples, ILASP2 ran out of memory.  By 200
examples, ILASP2i is already over 2 orders of magnitude faster and uses over 2
orders of magnitude less memory than ILASP2, showing a significant improvement
in scalability. The fact that by 500 examples ILASP2i is an order of magnitude
faster without the pre-translation shows that, in this problem domain, the
context is a large factor in this improvement; however, ILASP2i\_pt's
significantly improved performance over ILASP2 shows that the iterative nature
of ILASP2i is also a large factor.

\section{Related Work}
\label{sec:related}

Most approaches to ILP address the learning of definite
programs~\cite{srinivasan2001aleph,metagol}, usually aiming to learn
Prolog programs. As the language features of Prolog and ASP are different (e.g.
ASP lacks lists, Prolog lacks choice), a comparison is difficult. On the shared
language of ASP and the fragment of Prolog\break learned by these systems
(definite rules), a traditional ILP task can be represented with a single
positive example (where the inclusions (resp. exclusions) of this example correspond to the
positive (resp. negative) examples in the original task).

The idea of context-dependent example has similarities with the concept of
\emph{learning from interpretation transitions}
(LFIT)~\cite{inoue2014learning}, where examples are pairs of set of atoms
$\langle I, J\rangle$ such that $B \cup H$ must satisfy $T_{B \cup H}(I) =
J$ (where $T_P(I)$ is the set of immediate consequences of $I$ with respect to
the program $P$). LFIT technically learns under the supported model semantics
and uses a far smaller language than that supported by $ILP_{LOAS}^{context}$
(not supporting choice rules or hard or weak constraints), but
can be simply represented in $ILP_{LOAS}^{context}$. The head $\asp{h}$ of each
rule in the background knowledge and hypothesis space should be replaced by
$\asp{j(h)}$, and each body literal $\asp{b}$, by $\asp{i(b)}$. Each example
$\langle I, J\rangle$ should then be mapped to a context-dependent positive
example $\langle\langle \lbrace \asp{j(a)} \mid \asp{a} \in J\rbrace,\emptyset
\rangle, \lbrace \asp{i(a)\ruleend}\mid \asp{a} \in I\rbrace\rangle$.

Other than our own frameworks, the two main ILP frameworks
under the answer set semantics are \emph{brave} and \emph{cautious}
induction~\cite{Sakama2009}. As $ILP_{LOAS}^{context}$ subsumes $ILP_{LOAS}$,
$ILP_{LOAS}^{context}$ inherits the ability to perform both brave and cautious
induction. ILASP2i is therefore more general than systems such
as~\cite{ray2009nonmonotonic,Corapi2012,raspal}, which can only perform brave
induction. In ILP, learners can be divided into batch learners (those which consider all
examples simultaneously), such as~\cite{ray2009nonmonotonic,Corapi2012,raspal,JELIA_ILASP}, and learners which
consider each example in turn (using a cover loop), such
as~\cite{srinivasan2001aleph,muggleton1995inverse,ray2003hybrid}. Under the
answer set semantics, most learners are batch learners due to the
non-monotonicity. In fact, it is worth noting that, in particular, although the
HAIL~\cite{ray2003hybrid} algorithm for learning definite clauses employs a
cover loop, the later XHAIL algorithm is a batch learner as it learns
non-monotonic programs~\cite{ray2009nonmonotonic}.  One approach which did
attempt to utilise a cover loop is~\cite{sakama2005}. Their approach, however,
was only sound for a small (monotonic) fragment of ASP if the task had multiple
examples, as otherwise later examples could cause earlier examples to become
uncovered.

The ILED system~\cite{katzouris2015incremental} extended the ideas behind XHAIL
in order to allow incremental learning of event definitions.  This system takes
as input, multiple ``windows'' of examples and incrementally learns a
hypothesis.  As the approach is based on theory revision (at each step,
revising the hypothesis from the previous step), ILED is not guaranteed to
learn an optimal solution. In contrast, ILASP2i learns a new hypothesis in each
iteration and incrementally builds the set of relevant examples.

\section{Conclusion}

In this paper, we have presented an extension to our $ILP_{LOAS}$ framework
which allows examples to be given with extra background knowledge called the
context of the example. We have shown that these contexts can be used to give
structure to the background knowledge, showing which parts apply to which
examples. We have also presented a new algorithm, ILASP2i, which makes use of
this added structure to improve the efficiency over the previous ILASP2.
 In Section~\ref{sec:benchmarks}, we demonstrated that our new approach is
considerably faster for tasks with large numbers of examples.

Unlike previous systems for learning under the answer set semantics, ILASP2i is
not a batch learner and does not need to consider all examples at the same
time, but instead iteratively builds a set of relevant examples. This
combination of relevant examples and the added structure given by contexts
means that ILASP2i can be up to 2 orders of magnitude better than ILASP2, both
in terms of time and memory usage.  In future work, we intend to investigate
how to improve the scalability of \algname with larger hypothesis spaces and
with noisy examples.

\bibliographystyle{acmtrans}
\bibliography{paper}

\begin{thebibliography}{}

\bibitem[\protect\citeauthoryear{Athakravi, Corapi, Broda, and Russo}{Athakravi
  et~al\mbox{.}}{2014}]{raspal}
{\sc Athakravi, D.}, {\sc Corapi, D.}, {\sc Broda, K.}, {\sc and} {\sc Russo,
  A.} 2014.
\newblock Learning through hypothesis refinement using answer set programming.
\newblock In {\em Inductive Logic Programming}. Springer, 31--46.

\bibitem[\protect\citeauthoryear{Corapi, Russo, and Lupu}{Corapi
  et~al\mbox{.}}{2012}]{Corapi2012}
{\sc Corapi, D.}, {\sc Russo, A.}, {\sc and} {\sc Lupu, E.} 2012.
\newblock Inductive logic programming in answer set programming.
\newblock In {\em Inductive Logic Programming}. Springer, 91--97.

\bibitem[\protect\citeauthoryear{Inoue, Ribeiro, and Sakama}{Inoue
  et~al\mbox{.}}{2014}]{inoue2014learning}
{\sc Inoue, K.}, {\sc Ribeiro, T.}, {\sc and} {\sc Sakama, C.} 2014.
\newblock Learning from interpretation transition.
\newblock {\em Machine Learning\/}~{\em 94,\/}~1, 51--79.

\bibitem[\protect\citeauthoryear{Katzouris, Artikis, and Paliouras}{Katzouris
  et~al\mbox{.}}{2015}]{katzouris2015incremental}
{\sc Katzouris, N.}, {\sc Artikis, A.}, {\sc and} {\sc Paliouras, G.} 2015.
\newblock Incremental learning of event definitions with inductive logic
  programming.
\newblock {\em Machine Learning\/}~{\em 100,\/}~2-3, 555--585.

\bibitem[\protect\citeauthoryear{Law, Russo, and Broda}{Law
  et~al\mbox{.}}{2014}]{JELIA_ILASP}
{\sc Law, M.}, {\sc Russo, A.}, {\sc and} {\sc Broda, K.} 2014.
\newblock Inductive learning of answer set programs.
\newblock In {\em Logics in Artificial Intelligence (JELIA 2014)}. Springer.

\bibitem[\protect\citeauthoryear{Law, Russo, and Broda}{Law
  et~al\mbox{.}}{2015a}]{ICLP15}
{\sc Law, M.}, {\sc Russo, A.}, {\sc and} {\sc Broda, K.} 2015a.
\newblock Learning weak constraints in answer set programming.
\newblock {\em Theory and Practice of Logic Programming\/}~{\em 15,\/}~4-5,
  511--525.

\bibitem[\protect\citeauthoryear{Law, Russo, and Broda}{Law
  et~al\mbox{.}}{2015b}]{ILASP2Proof}
{\sc Law, M.}, {\sc Russo, A.}, {\sc and} {\sc Broda, K.} 2015b.
\newblock Proof of the soundness and completeness of {ILASP2}.
\newblock \url{https://www.doc.ic.ac.uk/~ml1909/Proofs_for_ILASP2.pdf}.

\bibitem[\protect\citeauthoryear{Lifschitz and Turner}{Lifschitz and
  Turner}{1994}]{splitting}
{\sc Lifschitz, V.} {\sc and} {\sc Turner, H.} 1994.
\newblock Splitting a logic program.
\newblock In {\em ICLP}. Vol.~94. 23--37.

\bibitem[\protect\citeauthoryear{Muggleton}{Muggleton}{1991}]{muggleton1991}
{\sc Muggleton, S.} 1991.
\newblock Inductive logic programming.
\newblock {\em New generation computing\/}~{\em 8,\/}~4, 295--318.

\bibitem[\protect\citeauthoryear{Muggleton}{Muggleton}{1995}]{muggleton1995inverse}
{\sc Muggleton, S.} 1995.
\newblock Inverse entailment and progol.
\newblock {\em New generation computing\/}~{\em 13,\/}~3-4, 245--286.

\bibitem[\protect\citeauthoryear{Muggleton, Lin, Pahlavi, and
  Tamaddoni-Nezhad}{Muggleton et~al\mbox{.}}{2014}]{metagol}
{\sc Muggleton, S.~H.}, {\sc Lin, D.}, {\sc Pahlavi, N.}, {\sc and} {\sc
  Tamaddoni-Nezhad, A.} 2014.
\newblock Meta-interpretive learning: application to grammatical inference.
\newblock {\em Machine Learning\/}~{\em 94,\/}~1, 25--49.

\bibitem[\protect\citeauthoryear{Poxrucker, Bahle, and Lukowicz}{Poxrucker
  et~al\mbox{.}}{2014}]{poxrucker2014}
{\sc Poxrucker, A.}, {\sc Bahle, G.}, {\sc and} {\sc Lukowicz, P.} 2014.
\newblock Towards a real-world simulator for collaborative distributed learning
  in the scenario of urban mobility.
\newblock In {\em Proceedings of the Eighth IEEE International Conference on
  Self-Adaptive and Self-Organizing Systems Workshops}. IEEE Computer Society,
  44--48.

\bibitem[\protect\citeauthoryear{Ray}{Ray}{2009}]{ray2009nonmonotonic}
{\sc Ray, O.} 2009.
\newblock Nonmonotonic abductive inductive learning.
\newblock {\em Journal of Applied Logic\/}~{\em 7,\/}~3, 329--340.

\bibitem[\protect\citeauthoryear{Ray, Broda, and Russo}{Ray
  et~al\mbox{.}}{2003}]{ray2003hybrid}
{\sc Ray, O.}, {\sc Broda, K.}, {\sc and} {\sc Russo, A.} 2003.
\newblock Hybrid abductive inductive learning: A generalisation of progol.
\newblock In {\em Inductive Logic Programming}. Springer, 311--328.

\bibitem[\protect\citeauthoryear{Sakama}{Sakama}{2005}]{sakama2005}
{\sc Sakama, C.} 2005.
\newblock Induction from answer sets in nonmonotonic logic programs.
\newblock {\em ACM Transactions on Computational Logic (TOCL)\/}~{\em 6,\/}~2,
  203--231.

\bibitem[\protect\citeauthoryear{Sakama and Inoue}{Sakama and
  Inoue}{2009}]{Sakama2009}
{\sc Sakama, C.} {\sc and} {\sc Inoue, K.} 2009.
\newblock Brave induction: a logical framework for learning from incomplete
  information.
\newblock {\em Machine Learning\/}~{\em 76,\/}~1, 3--35.

\bibitem[\protect\citeauthoryear{Srinivasan}{Srinivasan}{2001}]{srinivasan2001aleph}
{\sc Srinivasan, A.} 2001.
\newblock The aleph manual.
\newblock {\em Machine Learning at the Computing Laboratory, Oxford
  University\/}.

\end{thebibliography}
\clearpage

\begin{appendix}
\section{Proofs}\label{sec:proofs}

\setcounter{fact}{0}
\setcounter{theorem}{1}

In this section, we give the proofs of the theorems in the main paper.  First,
we prove the preliminary lemma (Lemma~\ref{lem:context}). Really, this is a
corollary of the splitting set theorem~\cite{splitting}. $e_{U}(P, X)$ is the
partial evaluation of $P$ with respect to $X$ (over the atoms in $U$), which is
described in~\cite{splitting}.

\begin{lemma}

  For any program $P$ (consisting of normal rules, choice rules and constraints)
  and any set of pairs $S = \lbrace \langle C_1, \asp{a_1}\rangle, \ldots,
  \langle C_n, \asp{a_n}\rangle\rbrace$ such that none of the atoms $\asp{a_i}$
  appear in $P$ (or in any of the $C$'s) and each $\asp{a_i}$ atom is unique:
  $AS(P \cup \left\{ \asp{1 \lbrace a_1, \ldots, a_n \rbrace 1\ruleend}\right\}
  \cup \left\{ append(C_i, \asp{a_i}) \middle| \langle C_i, \asp{a_i}\rangle \in
  S\right\})= \left\{ A \cup \lbrace \asp{a_i}\rbrace \middle| A \in AS(P \cup
  C_i), \langle C_i, \asp{a_i}\rangle \in S\right\}$
\end{lemma}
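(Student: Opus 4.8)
The plan is to invoke the splitting set theorem~\cite{splitting} with the splitting set $U = \lbrace \asp{a_1}, \ldots, \asp{a_n}\rbrace$. Write $Q$ for the program on the left-hand side, i.e.\ $Q = P \cup \lbrace \asp{1 \lbrace a_1, \ldots, a_n \rbrace 1\ruleend}\rbrace \cup \lbrace append(C_i, \asp{a_i}) \mid \langle C_i, \asp{a_i}\rangle \in S\rbrace$. First I would check that $U$ is a splitting set for $Q$: a rule has a head atom in $U$ only if that head atom is one of the $\asp{a_i}$, but by assumption the $\asp{a_i}$ occur neither in $P$ nor in any $C_j$ (hence not in the heads produced by $append(C_j,\asp{a_j})$). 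The only rule with a head atom in $U$ is therefore the choice rule $\asp{1 \lbrace a_1, \ldots, a_n \rbrace 1\ruleend}$, all of whose atoms already lie in $U$, so the splitting condition holds.

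Next I would identify the bottom $b_U(Q)$, the set of rules all of whose atoms lie in $U$. Setting aside degenerate empty-body constraints (dealt with below), this is exactly the single choice rule, whose answer sets are the singletons $\lbrace \asp{a_k}\rbrace$ for $k = 1, \ldots, n$, since $\asp{1 \lbrace a_1, \ldots, a_n \rbrace 1\ruleend}$ forces exactly one $\asp{a_i}$ to be true. The top is then $Q \setminus b_U(Q) = P \cup \lbrace append(C_i,\asp{a_i}) \mid \langle C_i,\asp{a_i}\rangle\in S\rbrace$.

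The crux is the partial evaluation $e_U(Q \setminus b_U(Q), \lbrace \asp{a_k}\rbrace)$ for a fixed bottom answer set $\lbrace \asp{a_k}\rbrace$. Every rule of $P$ contains no atom of $U$, so its $U$-restricted body is vacuously satisfied and it survives unchanged. Every rule of $append(C_i,\asp{a_i})$ carries $\asp{a_i}$ positively in its body: if $i = k$ then $\asp{a_k}$ is satisfied by $\lbrace\asp{a_k}\rbrace$ and deleting it restores the original rule of $C_k$; if $i \neq k$ then $\asp{a_i} \notin \lbrace\asp{a_k}\rbrace$ and the rule is removed. Hence $e_U(Q \setminus b_U(Q), \lbrace\asp{a_k}\rbrace) = P \cup C_k$. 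By the splitting set theorem, $X \in AS(Q)$ if and only if $X = \lbrace\asp{a_k}\rbrace \cup A$ for some $k$ and some $A \in AS(P \cup C_k)$, which is precisely the claimed right-hand side.

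The main obstacle is that the classical splitting set theorem is stated for normal programs, whereas $Q$ also contains choice rules and constraints; I would address this by appealing to the version of the theorem that covers these constructs, or by first rewriting choice rules and constraints with their standard normal-rule encodings over fresh auxiliary atoms, which land in the top and do not disturb the splitting on $U$. A minor point to dispatch is the degenerate case where some $C_i$ contains an empty-body constraint: appending $\asp{a_i}$ then yields a constraint whose only atom is $\asp{a_i}$, which falls into $b_U(Q)$ and eliminates $\lbrace\asp{a_i}\rbrace$ as a bottom answer set. But in that case $P \cup C_i$ is itself unsatisfiable, so both sides drop the index $i$ and the equality is preserved.
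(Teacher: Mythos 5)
Your proof is correct and follows essentially the same route as the paper: the paper likewise applies the splitting set theorem with $U = \lbrace \asp{a_1}, \ldots, \asp{a_n}\rbrace$, takes the choice rule as the bottom (with answer sets $\lbrace \asp{a_1}\rbrace, \ldots, \lbrace \asp{a_n}\rbrace$), and observes that partial evaluation of the top with respect to $\lbrace \asp{a_k}\rbrace$ yields exactly $P \cup C_k$. Your additional care about the applicability of the theorem to choice rules and constraints, and about contexts containing empty-body constraints (whose appended versions fall into the bottom), addresses edge cases the paper's terse proof passes over silently, and both are resolved correctly.
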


\begin{proof}

  The answer sets of $\left\{ \asp{1 \lbrace a_1, \ldots, a_n \rbrace
  1\ruleend}\right\}$ are $\lbrace \asp{a_1}\rbrace, \ldots, \lbrace
  \asp{a_n}\rbrace$, hence by the splitting set theorem (using $U = \lbrace
  \asp{a_1}, \ldots, \asp{a_n}\rbrace$ as a splitting set):

  $AS(P \cup \left\{ \asp{1 \lbrace a_1, \ldots, a_n \rbrace 1\ruleend}\right\}
  \cup \left\{ append(C_i, \asp{a_i}) \middle| \langle C_i, \asp{a_i}\rangle \in
  S\right\})$\\
  \mbox{\hspace{10mm}}$= \left\{ A' \cup \lbrace \asp{a_j}\rbrace \middle|\begin{array}{c}
      \asp{a_j} \in \lbrace \asp{a_1}, \ldots, \asp{a_n}\rbrace\\
      A' \in AS(e_{U}(P \cup \lbrace append(C_i, \asp{a_i})\mid \langle C_i, \asp{a_i}\rangle \in A\rbrace, \lbrace \asp{a_j}\rbrace))\\
  \end{array}\right\}$\\
  \mbox{\hspace{10mm}}$= \left\{ A \cup \lbrace \asp{a_i}\rbrace\middle| A
    \in AS(P \cup C_i), \langle C_i, \asp{a_i}\rangle \in S\right\}$.
\end{proof}

\begin{theorem}\label{thm:complexity}
  The complexity of deciding whether an $ILP_{LOAS}^{context}$ task is satisfiable is $\Sigma^{P}_{2}$-complete.
\end{theorem}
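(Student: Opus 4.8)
The plan is to establish membership in $\Sigma^P_2$ and $\Sigma^P_2$-hardness separately, exploiting the translation $\mathcal{T}_{LOAS}$ (Definition~\ref{def:translation}) and Theorem~\ref{thm:translation} to reduce everything to the already-understood $ILP_{LOAS}$ case. For \textbf{membership}, I would argue that by Theorem~\ref{thm:translation} any $ILP_{LOAS}^{context}$ task $T$ is satisfiable if and only if $\mathcal{T}_{LOAS}(T)$ is satisfiable as an $ILP_{LOAS}$ task, and that $\mathcal{T}_{LOAS}(T)$ is computable in polynomial time in the (propositional) input size --- this is immediate from the definition, since each context is appended with a single fresh atom and one choice rule is added. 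Since the excerpt references an analogous complexity result (Theorem~2 in~\cite{ICLP15}) placing $ILP_{LOAS}$ satisfiability in $\Sigma^P_2$ for propositional tasks, membership of $ILP_{LOAS}^{context}$ follows by composing this polynomial translation with the $\Sigma^P_2$ decision procedure for $ILP_{LOAS}$. I would note that $\Sigma^P_2$ is closed under polynomial-time many-one reductions, so no extra work is needed here beyond checking that $\mathcal{T}_{LOAS}$ is poly-time.

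For \textbf{hardness}, the cleanest route is to observe that $ILP_{LOAS}^{context}$ subsumes $ILP_{LOAS}$ syntactically: every $ILP_{LOAS}$ task is (isomorphic to) an $ILP_{LOAS}^{context}$ task in which every example is equipped with the empty context $\langle e, \emptyset\rangle$. The excerpt already remarks that ``examples with empty contexts are equivalent to examples in $ILP_{LOAS}$,'' so this embedding preserves satisfiability and is trivially polynomial. Consequently, whatever $\Sigma^P_2$-hardness construction was used for $ILP_{LOAS}$ in~\cite{ICLP15} lifts directly to $ILP_{LOAS}^{context}$: a $\Sigma^P_2$-hard problem reduces to propositional $ILP_{LOAS}$ satisfiability, which in turn reduces (via the empty-context embedding) to $ILP_{LOAS}^{context}$ satisfiability. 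I would spell out explicitly that the empty-context embedding is a valid reduction by checking each of the five conditions in Definition~\ref{def:loas_context} collapses to the corresponding condition in Definition~\ref{def:LOASTheory} when all $C_i = \emptyset$.

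The main obstacle, and the part requiring the most care, is the \emph{membership} direction rather than hardness, because it depends on the precise complexity behaviour of the verification oracle after translation. Specifically, I must confirm that deciding whether a \emph{candidate} hypothesis $H \subseteq S_M$ covers all examples of $\mathcal{T}_{LOAS}(T)$ is a $\Pi^P_1$ (i.e.\ $\mathrm{coNP}$) check relative to the guessed $H$ --- the positive and negative example conditions are brave/cautious entailment queries (in $\mathrm{NP}$/$\mathrm{coNP}$), and the brave/cautious ordering conditions involve the $\prec_P$ relation over pairs of answer sets, which remains within the second level. The subtlety is that $\mathcal{T}_{LOAS}$ moves contexts into the background knowledge and introduces the disjoint-context choice rule; by Lemma~\ref{lem:context} the answer sets of the translated program decompose cleanly into the per-context answer sets, so the added machinery does not raise the complexity of any individual coverage check above that of plain $ILP_{LOAS}$. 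Once this is verified, the overall ``guess $H$, then verify coverage with an $\mathrm{NP}$ oracle'' structure places the problem in $\Sigma^P_2$, matching the lower bound and giving completeness.
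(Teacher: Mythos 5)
Your proposal is correct and follows essentially the same route as the paper's own proof: both directions are handled by polynomial many-one reductions between $ILP_{LOAS}$ and $ILP_{LOAS}^{context}$ --- the empty-context embedding for hardness and the $\mathcal{T}_{LOAS}$ translation (Theorem~\ref{thm:translation}) for membership, relying on the $\Sigma^{P}_{2}$-completeness of $ILP_{LOAS}$ from~\cite{ICLP15}. Your final paragraph's re-examination of the verification oracle is unnecessary, since, as you yourself note earlier, closure of $\Sigma^{P}_{2}$ under polynomial-time reductions already yields membership once the translation is known to be polynomial and satisfiability-preserving.
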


\begin{proof}

  Deciding satisfiability for $ILP_{LOAS}$ is $\Sigma^{P}_2$-complete
  (\cite{ICLP15}). It is therefore sufficient to show that there is a
  polynomial mapping from $ILP_{LOAS}$ to $ILP_{LOAS}^{context}$ and a
  polynomial mapping from $ILP_{LOAS}^{context}$ to $ILP_{LOAS}$. The former is
  trivial (any $ILP_{LOAS}$ task can be mapped to the same task in
  $ILP_{LOAS}^{context}$ with empty contexts). The latter follows from
  theorem~\ref{thm:translation}.
\end{proof}

\begin{theorem}\label{thm:terminate}
  \algname terminates for any well defined $ILP_{LOAS}^{context}$ task.
\end{theorem}

\begin{proof}

  Assume that the task $T=\langle B, S_M, E\rangle$ is well defined. This means
  that $T_1 = \mathcal{T}_{LOAS}(T)$ is a well defined $ILP_{LOAS}$ task (every
  possible hypothesis has a finite grounding when combined with the background
  knowledge of $T_1$). Note that this also means that $T_2 =
  \mathcal{T}_{LOAS}(\langle B, S_M, Relevant\rangle)$ is well defined in each
  iteration as the size of the grounding of the background knowledge of $T_2$
  combined with each hypothesis will be smaller than or equal to the size of
  the background in $T_1$ (the background knowledge of $T_2$ is almost a subset
  of the background in $T_1$, other than the extra choice rule, which is
  smaller).

  The soundness of ILASP2~\cite{ICLP15} can be used to show that $H$ will
  always cover every example in $Relevant$; hence, at each step $re$ must be an
  example which is in $E$ but not in $Relevant$.  As there are a finite number
  of examples in $E$, this means there can only be a finite number of
  iterations; hence, it remains to show that each iteration terminates.
  This is the case because, as $\mathcal{T}_{LOAS}(\langle B, S_M,
  Relevant\rangle)$ is well defined, the call to ILASP2 terminates
  (\cite{ICLP15}) and $findRelevantExample$ terminates (\ref{sec:meta}).
\end{proof}

\begin{theorem}
  \algname is sound for any well defined $ILP_{LOAS}^{context}$ task, and
  returns an optimal solution if one exists.
\end{theorem}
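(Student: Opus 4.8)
The plan is to establish soundness, the correctness of the $\asp{UNSATISFIABLE}$ verdict, and optimality in turn, all resting on a single loop invariant and leaning on Theorems~\ref{thm:translation} and~\ref{thm:terminate}. Termination is already guaranteed by Theorem~\ref{thm:terminate}, so I may assume the algorithm halts, returning either a hypothesis $H$ or $\asp{UNSATISFIABLE}$. The central invariant I would maintain is that whenever the assignment on line~6 yields a non-nil $H$, that $H$ is an \emph{optimal} solution of the context-dependent sub-task $\langle B, S_M, Relevant\rangle$. This follows directly from the soundness and completeness of ILASP2 together with Theorem~\ref{thm:translation}: the call to ILASP2 returns a shortest solution of $\mathcal{T}_{LOAS}(\langle B, S_M, Relevant\rangle)$, which by the equivalence $ILP_{LOAS}(\mathcal{T}_{LOAS}(T')) = ILP_{LOAS}^{context}(T')$ is exactly an optimal solution of the context-dependent sub-task; in particular $H \subseteq S_M$.

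First I would handle the $\asp{UNSATISFIABLE}$ branch (line~7). If ILASP2 returns nil, then $\mathcal{T}_{LOAS}(\langle B, S_M, Relevant\rangle)$ has no solution, so by Theorem~\ref{thm:translation} the sub-task $\langle B, S_M, Relevant\rangle$ is itself unsatisfiable. Since every example added to $Relevant$ was drawn from $E$, any solution of the full task $T$ would necessarily cover all of $Relevant$ and hence be a solution of the sub-task. The absence of such a solution therefore forces $T$ to be unsatisfiable, so the verdict is correct.

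Next I would treat the case where the loop exits and returns $H$ (line~10). Exiting the \textbf{while} loop means $findRelevantExample(\langle B, S_M, E\rangle, H)$ returned nil, i.e.\ no example of $E$ fails to be covered by $B \cup H$; equivalently, by Definition~\ref{def:relevant}, $H$ covers every example in $E$. Combined with $H \subseteq S_M$ from the invariant, the conditions of Definition~\ref{def:loas_context} are satisfied, so $H \in ILP_{LOAS}^{context}(T)$ and soundness holds. For optimality I would use the monotonicity of solution sets under adding examples: because $Relevant \subseteq E$, every solution of $T$ is also a solution of $\langle B, S_M, Relevant\rangle$. Let $H^{\ast}$ be any optimal solution of $T$; then $H^{\ast}$ is a solution of the sub-task, and by the invariant $H$ is a shortest such solution, so $|H| \le |H^{\ast}|$. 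Conversely $H$ is itself a solution of $T$ (just shown) while $H^{\ast}$ is optimal for $T$, giving $|H^{\ast}| \le |H|$. Hence $|H| = |H^{\ast}|$ and $H$ is optimal for $T$.

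The main obstacle is the optimality step rather than soundness: optimality of $H$ over the reduced example set does not, a priori, entail optimality over the full set. This is exactly where the \emph{relevant examples persist through the iterations} intuition pays off — the key enabling fact is the inclusion $\{H : H \in ILP_{LOAS}^{context}(T)\} \subseteq \{H : H \in ILP_{LOAS}^{context}(\langle B, S_M, Relevant\rangle)\}$ whenever $Relevant \subseteq E$, which lets an optimum computed over the smaller example set serve as an optimum over the whole task, once we have separately verified (via the loop exit condition) that the returned $H$ covers all of $E$.
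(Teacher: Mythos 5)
Your proof is correct and follows essentially the same route as the paper's: soundness from the loop-exit condition (every example in $E$ covered, $H \subseteq S_M$), correctness of the $\asp{UNSATISFIABLE}$ verdict from the fact that $Relevant \subseteq E$ makes unsatisfiability of the sub-task propagate to the full task, and optimality from $H$ being optimal for $\langle B, S_M, Relevant\rangle$ while every solution of the full task is also a solution of the sub-task. The only difference is presentational: you make explicit the appeal to Theorem~\ref{thm:translation} linking the ILASP2 call on the translated task to the context-dependent sub-task, and you spell out the two-sided length comparison with $H^{\ast}$ that the paper compresses into a one-line contradiction.
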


\begin{proof}
  If the \algname algorithm returns a hypothesis then the while loop must
  terminate. For this to happen $findRelevantExample$ must return \texttt{nil}.
  This means that $H$ must cover every example in $E$. Hence \algname is
  sound.
  As the algorithm terminates (see Theorem~\ref{thm:terminate}), the only way
  for a solution not to be returned is when $ILASP2$ returns \texttt{nil}.
  Since $ILASP2$ is complete~\cite{ICLP15}, this is only possible when $\langle
  B, S_M, Relevant\rangle$ is unsatisfiable. But if $\langle B, S_M,
  Relevant\rangle$ is unsatisfiable then so is $\langle B, S_M, E\rangle$.

  It remains to show that when a solution is returned, it is an optimal
  solution. Any solution $H$ returned must be an optimal solution of $\langle
  B, S_M, Relevant\rangle$, (as ILASP2\break returns an optimal solution).  As it
  must also be a solution of $\langle B, S_M, E\rangle$, it must be an optimal
  solution (any shorter solution would be a solution of $\langle B, S_M,
  Relevant\rangle$, contradicting that $H$ is an optimal solution for $\langle
  B, S_M, Relevant\rangle$).
\end{proof}

\section{$findRelevantExamples$}
\label{sec:meta}

In this section, we describe (and prove the correctness of) the
$findRelevantExamples$ method which was omitted from the main paper.  The
method uses a meta encoding in ASP. Given a learning task and a hypothesis
from the hypothesis space, this meta encoding is used to compute the set
of examples that are covered and the set that are not covered. The meta
encoding is formalised in definition~\ref{def:meta}, but we first introduce
some notation in order to simplify the main definition. Some definitions are
similar to those used in the ILASP2 meta representation~\cite{ICLP15}.

\begin{definition}
  For any ASP program $P$ and predicate name $\asp{pred}$, $reify(P,
  \asp{pred})$ denotes the program constructed by replacing every atom $\asp{a}
  \in P'$ (where $P'$ is $P$ with the weak constraints removed) by
  $\asp{pred(a)}$.  We use the same notation for sets of literals/partial
  interpretations, so for a set $S$: $reify(S, pred) = \lbrace \asp{pred(atom)}
  : \asp{atom} \in S \rbrace$.
\end{definition}

Definition~\ref{def:weak_rep} formalises the way we represent weak
constraints in our meta encoding. We use this representation to check whether
ordering examples are covered. We use $\asp{as1}$ and $\asp{as2}$ to represent
the atoms in two answer sets ($\asp{as1}$ and $\asp{as2}$ occur elsewhere in
our encoding). The $\asp{w}$ atoms are then used to capture the penalties paid
by each answer set at each level.

\begin{definition}\label{def:weak_rep}
  For any ASP program $P$, we write $weak(P)$ to
  mean the program constructed from the weak constraints in $P$, translating
  each weak constraint $\asp{:\sim b_1, \ldots, b_m,}$\break $\asp{\naf
  b_{m+1}, \ldots, \naf b_n\ruleend[lev@wt, t_1, \ldots, t_{k}]}$ to the rules:

    $\left\{\begin{array}{l}
      \asp{w(wt, lev, terms(t_1, \ldots, t_k), as1) \codeif as1(b_1), \ldots, as1(b_m),}\\
      \asp{\mbox{\hspace{56mm}}\naf as1(b_{m+1}), \ldots, \naf as1(b_n)\ruleend}\\
      \asp{w(wt, lev, terms(t_1, \ldots, t_k), as2) \codeif as2(b_1), \ldots, as2(b_m),}\\
      \asp{\mbox{\hspace{56mm}}\naf as2(b_{m+1}), \ldots, \naf as2(b_n)\ruleend}
    \end{array}\right\}$
\end{definition}

We now introduce a simplified version of the ASP program fragment which is used
by ILASP2 to check whether one answer set dominates another. This is used in
determining whether an ordering example is covered by a hypothesis. This makes
use of the $\asp{w}$ atoms which are generated by the $\asp{w}$ rules in
definition~\ref{def:weak_rep}, and captures the definition of dominates given
in Section~\ref{sec:background}.

\begin{definition}

  $dominates$ is the program:

\begin{math}
\left\{\begin{array}{l}
\asp{dom\_lv(L) \codeif lv(L), \texttt{\#sum}\lbrace w(W,L,A, as1)=W, w(W,L,A, as2)=-W\rbrace < 0\ruleend}\\
\asp{non\_dom\_lv(L) \codeif lv(L), \texttt{\#sum}\lbrace w(W,L,A,as2)=W, w(W,L,A,as1)=-W\rbrace < 0\ruleend}\\
\asp{non\_bef(L) \codeif lv(L), lv(L2), L < L2, non\_dom\_lv(L2)\ruleend}\\
\asp{dominated \codeif dom\_lv(L), \naf non\_bef(L)\ruleend}\\
\end{array}\right\}
\end{math}

\end{definition}

In~\cite{ICLP15}, multiple instances of $dominates$ were included in the same
meta encoding, and hence the program was slightly more complicated in
order to track the different instances. The main structure of the program is
the same however, and hence the same results apply. The result we need for this
paper is proven (for the more general program) in~\cite{ILASP2Proof} and is
given by Lemma~\ref{lem:dominates}.

\begin{lemma}\label{lem:dominates}
  Let $I_1$ and $I_2$ be interpretations, $P$ be an ASP program and $L$ be the
  set of levels used in the weak constraints in $P$. The unique answer set of
%
  $
      dominates\cup
      \lbrace \asp{lv(l)\ruleend} \mid l \in L\rbrace\cup
      weak(P)
      \cup
          reify(I_1, \asp{as1}) \cup
          reify(I_1, \asp{as2})
          $
          contains the atom $\mathtt{dominated}$ if and only if $I_1$
  dominates $I_2$ wrt the weak constraints in $P$.
\end{lemma}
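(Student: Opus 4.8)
The plan is to analyse the program stratum by stratum: first establish that it has a unique answer set, then compute that answer set explicitly in terms of the scores of $I_1$ and $I_2$, and finally reconcile the derived atom $\mathtt{dominated}$ with the definition of $\prec_P$ from Section~\ref{sec:background}. (Note that the second reified component should be $reify(I_2, \asp{as2})$ rather than $reify(I_1, \asp{as2})$, so that the two interpretations being compared are $I_1$, tagged by $\asp{as1}$, and $I_2$, tagged by $\asp{as2}$; I read the statement with this correction throughout.)

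First I would observe that the whole program is stratified, which justifies speaking of \emph{the} unique answer set. The facts $reify(I_1,\asp{as1})$, $reify(I_2,\asp{as2})$ and $\lbrace \asp{lv(l)\ruleend} \mid l\in L\rbrace$ form the bottom stratum; the $\asp{w}$-rules of $weak(P)$ depend only on these facts, and although their bodies contain negated $\asp{as1}/\asp{as2}$ literals, those atoms are purely extensional, so no recursion through negation arises; the aggregate rules for $\asp{dom\_lv}$ and $\asp{non\_dom\_lv}$ depend only on the $\asp{w}$ and $\asp{lv}$ atoms; $\asp{non\_bef}$ depends positively on $\asp{non\_dom\_lv}$; and $\asp{dominated}$ depends on $\asp{dom\_lv}$ and on the negation of $\asp{non\_bef}$. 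Since negation never occurs inside a cycle, the program is (locally) stratified and hence has a single answer set, given by its iterated least model.

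Next I would compute this model. By the definitions of $reify$ and $weak(P)$ (Definition~\ref{def:weak_rep}), the reified body $\asp{as1(b_1)},\ldots$ of an $\asp{as1}$-rule is satisfied exactly when the body of the corresponding weak constraint is satisfied by $I_1$; hence the atom $\asp{w}(v,p,\asp{terms}(\vec t),\asp{as1})$ lies in the answer set iff $P$ has a ground weak-constraint instance with tail $[v@p,\vec t]$ whose body $I_1$ satisfies, and symmetrically for $\asp{as2}$ and $I_2$. These are exactly the tuples that define scores in Section~\ref{sec:background} (identical tuples collapsing to a single $\asp{w}$ atom, matching the set-based score), so for each level $\ell$ the aggregate in the $\asp{dom\_lv}$ rule evaluates to $s_1(\ell)-s_2(\ell)$, where $s_i(\ell)$ is the score of $I_i$ at $\ell$. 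Reading off the remaining rules then gives: $\asp{dom\_lv}(\ell)$ iff $s_1(\ell)<s_2(\ell)$; $\asp{non\_dom\_lv}(\ell)$ iff $s_2(\ell)<s_1(\ell)$; $\asp{non\_bef}(\ell)$ iff $s_2(\ell')<s_1(\ell')$ for some $\ell'>\ell$; and therefore $\asp{dominated}$ is derived iff there is a level $\ell$ with $s_1(\ell)<s_2(\ell)$ and $s_1(\ell')\leq s_2(\ell')$ for every $\ell'>\ell$.

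The final step, which I expect to be the real obstacle, is showing this last condition is equivalent to $I_1\prec_P I_2$, i.e.\ to the existence of a level $p$ with $s_1(p)<s_2(p)$ and $s_1(p')=s_2(p')$ for all $p'>p$. The subtlety is that the meta condition only asserts $s_1(\ell')\leq s_2(\ell')$ above the witness, whereas $\prec_P$ demands equality. I would resolve this by showing both conditions are equivalent to the single statement: \emph{the highest level at which $s_1$ and $s_2$ differ exists, and there $s_1<s_2$}. For $\prec_P$ the witness $p$ is forced to be this highest differing level, since all higher levels are equal. For the meta condition, given a witness $\ell$ I would lift it to the highest differing level $\ell^{\ast}$, which exists because $\ell$ itself is a differing level and satisfies $\ell^{\ast}\geq\ell$; as $\ell^{\ast}\geq\ell$ we have $s_1(\ell^{\ast})\leq s_2(\ell^{\ast})$, and since the scores differ at $\ell^{\ast}$ this forces $s_1(\ell^{\ast})<s_2(\ell^{\ast})$. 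The converse implications are immediate by taking the witness to be the highest differing level. Combining the three steps yields that $\mathtt{dominated}$ belongs to the unique answer set iff $I_1\prec_P I_2$.
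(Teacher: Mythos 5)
Your proof is correct, but it cannot be compared against an in-paper proof in the usual sense, because the paper never proves Lemma~\ref{lem:dominates}: it observes that $dominates$ is a simplified, single-instance version of the program used inside the ILASP2 meta-encoding and defers entirely to the external proof document \cite{ILASP2Proof}, where the result is established for the more general program that tracks several instances of $dominates$ at once. Your argument is therefore a self-contained substitute for that citation rather than a variant of an existing argument, and all three of its steps are sound: the stratification argument (negation and the \texttt{\#sum} aggregates refer only to atoms defined in strictly lower strata, so the iterated least model is the unique answer set); the correspondence between ground $\asp{w}$ atoms and the score-defining tuples, where your remark that duplicate tuples collapse to a single atom is exactly what makes the aggregate evaluate to $s_1(\ell)-s_2(\ell)$ under the paper's set-based definition of score; and the final reconciliation. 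That last step is where the real content lies, and you located it precisely: the program checks ``$s_1<s_2$ at some level and $s_1\le s_2$ at every higher level'', whereas the definition of $\prec_P$ in Section~\ref{sec:background} demands equality at every higher level, and lifting the witness to the highest differing level is the right way to close that gap. You were also right to emend the statement: $reify(I_1,\asp{as2})$ must be $reify(I_2,\asp{as2})$ (as printed, the program compares $I_1$ with itself and could never derive $\asp{dominated}$, so the biconditional would fail), and the corrected reading is the one actually used when the lemma is invoked in the proof of Theorem~\ref{thm:fre}, where the program is $A_1\cup A_2\cup weak(B\cup H)\cup\lbrace\asp{lv(l)\ruleend}\mid\asp{l}\in L\rbrace\cup dominates$ with $A_2$ extending $reify(e_2,\asp{as2})$. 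What the paper's citation buys is coverage of the more general multi-instance program that ILASP2 itself needs; what your proof buys is independence from the external document for the simplified program this paper actually uses, plus an explicit treatment of the $\le$-versus-$=$ subtlety that a reader of the citation would otherwise have to reconstruct.
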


Definition~\ref{def:meta} captures the meta encoding we use in
$findRelevantExamples$. This encoding is made of 6 components. $\mathcal{R}_1$
captures the background knowledge and hypothesis -- by reifying $B\cup H$, the
$\asp{as1}$ and $\asp{as2}$ atoms represent two answer sets $A_1$ and
$A_2$, and the $dominates$ program (together with $weak(B \cup H)$ and the
priority levels) checks whether $A_1$ dominates $A_2$. The programs
$\mathcal{R}_2$ to $\mathcal{R}_5$ are used to check whether each type of
example is covered. These programs make use of the predicate $\asp{test\_on}$
of arity 2 and the $\asp{test}$ predicate of arity 1. The meaning of
$\asp{test(ex_{id})}$ is that the example $ex$ should be tested. There is a
choice rule in $\mathcal{R}_6$ to say that each example should be tested.  For
the positive and negative examples, this means that they should be tested on
$\asp{as1}$ (meaning to check whether it is possible that an answer set of $B
\cup H$ extends this example). For an ordering example $\langle ex_1,
ex_2\rangle$ it is slightly more involved: $ex_1$ should be tested on
$\asp{as1}$ and $ex_2$ should be tested on $\asp{as2}$ (and the ordering should
be checked).

\begin{definition}\label{def:meta}

Let $T$ be the $ILP_{LOAS}^{context}$ task $\langle B, S_M, \langle E^{+}, E^{-},
O^{b}, O^{c}\rangle\rangle$ and $H$ be a hypothesis such that $H \subseteq S_M$. Let
$L$ be the set of all priority levels in $B \cup H$ $\mathcal{R}(T, H)$ is the
ASP program $\mathcal{R}_1(B \cup H) \cup \mathcal{R}_2(E^{+}) \cup
\mathcal{R}_3(E^{-}) \cup \mathcal{R}_4(O^{b}) \cup \mathcal{R}_5(O^{c}) \cup
\mathcal{R}_6(E^{+}\cup E^{-}, O^{b} \cup O^{c})$, where the individual
components are as follows:

\begin{itemize}
  \item $\mathcal{R}_1(B\cup H) = reify(B \cup H, \asp{as1}) \cup reify(B \cup H, \asp{as2})\cup
    weak(B\cup H) \cup \lbrace \asp{lv(l)\ruleend}\mid \asp{l} \in L\rbrace \cup dominates$
  \item $\mathcal{R}_2(E^{+}) = \left\{
    \begin{array}{l}
      \asp{cov(as1) \codeif test\_on(ex_{id}, as1),}\\
      \mbox{\hspace{5mm}}\asp{as1(e^{inc}_1),\ldots,as1(e^{inc}_m),}\\
      \mbox{\hspace{5mm}}\asp{\naf as1(e^{exc}_1),\ldots,\naf as1(e^{exc}_n)}\\
      \asp{cov(as2) \codeif test\_on(ex_{id}, as2),}\\
      \mbox{\hspace{5mm}}\asp{as2(e^{inc}_1),\ldots,as2(e^{inc}_m),}\\
      \mbox{\hspace{5mm}}\asp{\naf as2(e^{exc}_1),\ldots,\naf as2(e^{exc}_n)}\\
      \asp{\codeif \naf cov(as1), test\_on(ex_{id}, as1)\ruleend}\\
      \asp{\codeif \naf cov(as2), test\_on(ex_{id}, as2)\ruleend}\\
      append(reify(C, \asp{as1}), \asp{test\_on(ex_{id}, as1)})\\
      append(reify(C, \asp{as2}), \asp{test\_on(ex_{id}, as2)})\\
    \end{array}\middle|
    \begin{array}{c}
      ex \in E^{+},\\
      ex = \langle e, C \rangle,\\
      e = \langle \lbrace \asp{e^{i}_1,\ldots,e^{i}_m}\rbrace, \lbrace \asp{e^{e}_1,\ldots,e^{e}_n}\rbrace\rangle
    \end{array}\right\}$
  \item $\mathcal{R}_3(E^{-}) = \left\{
    \begin{array}{l}
    \asp{violated \codeif test\_on(ex_{id}, as1),}\\
      \mbox{\hspace{5mm}}\asp{as1(e^{inc}_1),\ldots, as1(e^{inc}_m),}\\
      \mbox{\hspace{5mm}}\asp{\naf as1(e^{exc}_1),\ldots,\naf as1(e^{exc}_n)\ruleend}\\
      append(reify(C, \asp{as1}), \asp{test\_on(ex_{id}, as1)})\\
      \asp{\codeif \naf violated, test\_on(ex_{id}, as1)\ruleend}
    \end{array}\middle|
    \begin{array}{c}
      ex \in E^{-},\\
      ex = \langle e, C \rangle,\\
      e = \langle \lbrace \asp{e^{i}_1,\ldots,e^{i}_m}\rbrace, \lbrace \asp{e^{e}_1,\ldots,e^{e}_n}\rbrace\rangle
    \end{array}\right\}$
  \item $\mathcal{R}_4(O^{b}) = \left\{
    \begin{array}{l}
      \asp{\codeif test(o_{id}), \naf dominated\ruleend}
    \end{array}\middle|
    \begin{array}{c}
      o \in O^{b}
    \end{array}\right\}$
  \item $\mathcal{R}_5(O^{c}) = \left\{
    \begin{array}{l}
      \asp{\codeif test(o_{id}), dominated\ruleend}
    \end{array}\middle|
    \begin{array}{c}
      o \in O^{c}
    \end{array}\right\}$
  \item $\mathcal{R}_6(\lbrace ex_1, \ldots ex_m \rbrace, \lbrace o_1, \ldots o_n \rbrace) = \left\{
    \begin{array}{l}
      \asp{1 \lbrace test(ex_1), \ldots, test(ex_m), test(o_1), \ldots, test(o_n) \rbrace 1\ruleend}
    \end{array}\right\}\\$
    $\mbox{\hspace{50mm}}\cup\left\{
    \begin{array}{l}
      \asp{test\_on(ex_i, \asp{as1})\codeif test(ex_i)\ruleend}
    \end{array}\middle|\begin{array}{l}
      ex_i \in \lbrace ex_1, \ldots, ex_m\rbrace
    \end{array}\right\}\\$
    $\mbox{\hspace{50mm}}\cup\left\{
    \begin{array}{l}
      \asp{test\_on(ex_1, \asp{as1})\codeif test(o_i)\ruleend}\\
      \asp{test\_on(ex_2, \asp{as2})\codeif test(o_i)\ruleend}
    \end{array}\middle|\begin{array}{c}
      o_i \in \lbrace o_1, \ldots, o_n\rbrace\\
      o_i = \langle ex_1, ex_2\rangle
    \end{array}\right\}\\$
\end{itemize}

\end{definition}

\begin{theorem}\label{thm:fre}
  Let $T$ be any $ILP_{LOAS}^{context}$ task and $H$ be any subset of the
  hypothesis space.

  \begin{enumerate}
    \item $\forall ex \in E^{+}$, $\exists A \in AS(\mathcal{R}(T, H))$ st
      $\asp{test(ex_{id})} \in A$ iff $H$ covers $ex$.
    \item $\forall ex \in E^{-}$, $\exists A \in AS(\mathcal{R}(T, H))$ st
      $\asp{test(ex_{id})} \in A$ iff $H$ does not cover $ex$.
    \item $\forall o \in O^{b}$, $\exists A \in AS(\mathcal{R}(T, H))$ st
      $\asp{test(o_{id})} \in A$ iff $H$ bravely respects $o$.
    \item $\forall o \in O^{c}$, $\exists A \in AS(\mathcal{R}(T, H))$ st
      $\asp{test(o_{id})} \in A$ iff $H$ does not cautiously
      respect $o$.
  \end{enumerate}
\end{theorem}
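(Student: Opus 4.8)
The plan is to prove the four biconditionals together by exploiting the block structure of $\mathcal{R}(T,H)$ and the splitting set theorem~\cite{splitting}, much as in Lemma~\ref{lem:context}. The key initial observation is that the choice rule in $\mathcal{R}_6$, together with the rules deriving $\asp{test\_on}$ from $\asp{test}$, forces every answer set of $\mathcal{R}(T,H)$ to contain exactly one $\asp{test}$ atom and its associated $\asp{test\_on}$ atom(s). I would therefore take $U$ to be the set of all $\asp{test}$ and $\asp{test\_on}$ atoms as a splitting set: the bottom program is exactly $\mathcal{R}_6$, and its answer sets are in one-to-one correspondence with the examples, each fixing a single $\asp{test}$ atom and its $\asp{test\_on}$ consequences. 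It then suffices, for each example, to analyse the partial evaluation of the remaining rules with respect to that bottom answer set, and to show that this ``top'' program is consistent exactly when the corresponding condition of the theorem holds.

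For the unary cases (parts 1 and 2) I fix the bottom answer set selecting a positive (resp.\ negative) example $ex = \langle e, C\rangle$, so that only the $\asp{test\_on}$ atom on $\asp{as1}$ is true. The partial evaluation activates the reified context $reify(C,\asp{as1})$ and keeps $reify(B\cup H, \asp{as1})$ in place, so that --- by faithfulness of $reify$ on normal rules, choice rules and constraints --- the $\asp{as1}$ atoms of any answer set encode an answer set of $B\cup C\cup H$. For a positive example the $\asp{cov}$ rule and its guarding constraint in $\mathcal{R}_2$ force this encoded answer set to extend $e$, so a witnessing answer set exists iff $\exists A \in AS(B\cup C\cup H)$ extending $e$, which is exactly coverage. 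For a negative example the $\asp{violated}$ rule and constraint in $\mathcal{R}_3$ likewise force the encoded answer set to extend $e$; here the polarity is flipped, so a witnessing answer set exists iff some answer set of $B\cup C\cup H$ extends $e$, i.e.\ iff $H$ does \emph{not} cover $ex$, giving part 2.

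For the ordering cases (parts 3 and 4) I fix the bottom answer set selecting an ordering $o = \langle \langle e_1, C_1\rangle, \langle e_2, C_2\rangle\rangle$, making the $\asp{test\_on}$ atoms for its first and second component true on $\asp{as1}$ and $\asp{as2}$ respectively. Since orderings are over $E^{+}$, the $\mathcal{R}_2$ rules for the two components now constrain $\asp{as1}$ and $\asp{as2}$ to encode answer sets of $B\cup C_1\cup H$ extending $e_1$ and of $B\cup C_2\cup H$ extending $e_2$. By Lemma~\ref{lem:dominates} applied to $weak(B\cup H)$ and the $\asp{lv}$ facts, $\asp{dominated}$ is derived iff $A_1 \prec_{B\cup H} A_2$. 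The constraint in $\mathcal{R}_4$ then admits an answer set iff some such pair satisfies $A_1 \prec_{B\cup H} A_2$ --- exactly brave respect --- while the constraint in $\mathcal{R}_5$ admits an answer set iff some such pair satisfies $A_1 \not\prec_{B\cup H} A_2$, i.e.\ iff $o$ is not cautiously respected, giving parts 3 and 4.

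The main obstacle is making the decomposition rigorous, and two points need care. First, the top still contains \emph{both} reified copies $reify(B\cup H,\asp{as1})$ and $reify(B\cup H,\asp{as2})$ together with $weak(B\cup H)$ and $dominates$; since the two copies live over the disjoint predicate families $\asp{as1}(\cdot)$ and $\asp{as2}(\cdot)$, a second application of the splitting set theorem shows their answer sets combine independently into genuine pairs $\langle A_1, A_2\rangle$, from which the value of $\asp{dominated}$ is read off. This independence is also what must be checked in the unary cases: there the second copy $reify(B\cup H,\asp{as2})$ is inert (no active context or guarding constraint), so one must ensure it can always be completed to an answer set, which relies on the consistency of $B\cup H$; this is the only delicate interaction between the two reified answer sets. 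Second, the whole argument rests on an auxiliary, routine-but-necessary fact that $reify$ is answer-set preserving on programs of normal rules, choice rules and constraints, and that appending a $\asp{test\_on}$ literal to a reified context realises exactly the conditional inclusion of that context; both must be stated explicitly to justify the step ``the $\asp{as1}$ atoms encode an answer set of $B\cup C\cup H$''.
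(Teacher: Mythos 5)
Your proposal is correct and follows essentially the same route as the paper's proof: both split $\mathcal{R}(T,H)$ on the $\asp{test}$/$\asp{test\_on}$ atoms so that $\mathcal{R}_6$ is the bottom program, reduce each of the four cases to the satisfiability of the partially evaluated top program for the selected example, rely on the answer-set faithfulness of $reify$, and invoke Lemma~\ref{lem:dominates} for the two ordering cases. The subtlety you flag about the inert copy $reify(B\cup H,\asp{as2})$ in the unary cases is genuine and is in fact handled more carelessly by the paper itself --- its proof asserts that every rule of $\mathcal{R}(T,H)$ outside $P_2$ contains a $\asp{test}$ or $\asp{test\_on}$ body atom and is therefore deleted by partial evaluation, which is false for $reify(B\cup H,\asp{as2})$, $weak(B\cup H)$ and $dominates$, so the equivalence really does require the consistency of $B\cup H$ exactly as you observe.
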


\begin{proof}
  \begin{enumerate}
    \item Let $ex = \langle e, C\rangle$ be a CDPI in $E^{+}$ st $e = \langle
      \lbrace \asp{e^{i}_1}, \ldots \asp{e^{i}_m}\rbrace,$ $\lbrace
      \asp{e^{e}_1}, \ldots, \asp{e^{e}_n}\rbrace\rangle$.

      \noindent $H$ covers $ex\Leftrightarrow \exists A \in AS(B \cup H \cup
      C)$ st $A$ extends $e$

      \leftskip=0mm

      $\Leftrightarrow \exists A \in AS(reify(B \cup H \cup C, \asp{as1}))$ st
      $A$ extends $reify(e, \asp{as1})$

      $\Leftrightarrow reify(B \cup H \cup C, \asp{as1}) \cup \left\{
        \begin{array}{l}
        \asp{cov(as1) \codeif as1(e_1), \ldots, as1(e_m),}\\
        \asp{\hspace{20mm}\naf as1(e_1), \ldots, \naf as1(e_n)\ruleend}\\
          \asp{\codeif \naf cov(as1)\ruleend}
        \end{array}
      \right\}$ is satisfiable (we refer to this program as $P_1$ later in the
      proof).

      $\Leftrightarrow reify(B \cup H, \asp{as1}) \cup append(reify(C,
      \asp{as1}), \asp{test\_on(ex_{id}, as1)})\\
      \mbox{\hspace{10mm}}\cup \left\{\begin{array}{l}
        \asp{cov(as1) \codeif test\_on(ex_{id}, as1),}\\
          \mbox{\hspace{5mm}}\asp{as1(e_1), \ldots, as1(e_m),}\\
          \mbox{\hspace{5mm}}\asp{\naf as1(e_1), \ldots, \naf as1(e_n)\ruleend}\\
          \asp{\codeif \naf cov(as1), test\_on(ex_{id}, as1)\ruleend}
      \end{array}\right\} \cup \mathcal{R}_6(E^{+}\cup
      E^{-}, O^{b}\cup O^{c})$\break has an answer set which contains
      $\asp{test(ex_{id})}$ (we refer to this program as $P_2$). This follows
      from the splitting set theorem, using the atoms in
      $\mathcal{R}_6(E^{+}\cup E^{-}, O^{b}\cup O^{c})$ as a splitting set --
      $\lbrace \asp{test(ex_{id})}, \asp{test\_on(ex_{id}, as1)}\rbrace$ is an
      answer set of the bottom program, leading to $P_1$ as the partially
      evaluated top program

      $\Leftrightarrow \mathcal{R}(T, H)$ has an answer set which contains
      $\asp{test(ex_{id})}$. Again, this is by the splitting set theorem,
      using the atoms in $\mathcal{R}_6(E^{+}\cup E^{-}, O^{b}\cup O^{c})$ as a
      splitting set, as $P_2 \subseteq \mathcal{R}(T,H)$ and each of the extra
      rules in $\mathcal{R}(T,H)$ which are not in $P_2$ contain a
      $\asp{test\_on}$ or $\asp{test}$ atom in the body that is not in the
      answer set $\lbrace \asp{test(ex_{id})}, \asp{test\_on(ex_{id},
      as1)}\rbrace$ and hence they are removed from the partially evaluated top
      program.

      \leftskip=0mm

    \item Let $ex = \langle e, C\rangle$ be a CDPI in $E^{-}$ st $e = \langle
      \lbrace \asp{e^{i}_1}, \ldots \asp{e^{i}_m}\rbrace,$ $\lbrace
      \asp{e^{e}_1}, \ldots, \asp{e^{e}_n}\rbrace\rangle$.

      \noindent $H$ does not cover $ex\Leftrightarrow
      \exists A \in AS(B \cup H \cup C)$ st $A$ extends $e$

      \leftskip=5mm

      $\Leftrightarrow \exists A \in AS(reify(B \cup H \cup C, \asp{as1}))$ st
      $A$ extends $reify(e, \asp{as1})$

      $\Leftrightarrow reify(B \cup H \cup C, \asp{as1}) \cup \left\{
        \begin{array}{l}
          \asp{violated \codeif as1(e_1), \ldots, as1(e_m),}\\
          \mbox{\hspace{5mm}}\asp{\naf as1(e_1), \ldots, \naf as1(e_n)\ruleend}\\
          \asp{\codeif \naf violated\ruleend}
        \end{array}
      \right\}$ is satisfiable (we refer to this program as $P_3$ later in the
      proof)

      $\Leftrightarrow reify(B \cup H, \asp{as1}) \cup append(reify(C,
      \asp{as1}), \asp{test\_on(ex_{id}, as1)})\\
      \mbox{\hspace{10mm}}\cup \left\{\begin{array}{l}
        \asp{violated \codeif test\_on(ex_{id}, as1),}\\
        \mbox{\hspace{5mm}}\asp{as1(e_1), \ldots, as1(e_m),}\\
        \mbox{\hspace{5mm}}\asp{\naf as1(e_1), \ldots, \naf as1(e_n)\ruleend}\\
          \asp{\codeif \naf violated, test\_on(ex_{id}, as1)\ruleend}
      \end{array}\right\} \cup \mathcal{R}_6(E^{+}\cup
      E^{-}, O^{b}\cup O^{c})$\break has an answer set which contains
      $\asp{test(ex_{id})}$ (we refer to this program as $P_4$). This follows by
      the splitting set theorem, using the atoms in $\mathcal{R}_6(E^{+}\cup
      E^{-}, O^{b}\cup O^{c})$ as a splitting set, $\lbrace \asp{test(ex_{id})},
      \asp{test\_on(ex_{id}, as1)}\rbrace$ is an answer set of the bottom
      program, leading to $P_3$ as the partially evaluated top program.

      $\Leftrightarrow \mathcal{R}(T, H)$ has an answer set which contains
        $\asp{test(ex_{id})}$. Again, this is by the splitting set theorem, using
      the atoms in $\mathcal{R}_6(E^{+}\cup E^{-}, O^{b}\cup O^{c})$ as a splitting
      set, as $P_4 \subseteq \mathcal{R}(T, H)$ and each of the extra rules in
      $\mathcal{R}(T, H)$ which are not in $P_4$ contain a $\asp{test\_on}$ or
      $\asp{test}$ atom in the body that is not in the answer set $\lbrace
      \asp{test(ex_{id})}, \asp{test\_on(ex_{id}, as1)}\rbrace$ and hence they
      are removed from the partially evaluated top program.

      \leftskip=0mm

    \item Let $o = \langle ex1, ex2\rangle$ be a CDOE in $O^{b}$ st $ex1 =
      \langle e1, C_1\rangle$, $ex2 = \langle e2, C_2\rangle$, $e_1 = \langle
      \lbrace \asp{e1^{i}_1}, \ldots, \asp{e1^{i}_m}\rbrace, \lbrace
      \asp{e1^{e}_1}, \ldots, \asp{e1^{e}_n}\rbrace\rangle$ and $e_2 = \langle
      \lbrace \asp{e2^{i}_1}, \ldots, \asp{e2^{i}_j}\rbrace, \lbrace
      \asp{e2^{e}_1}, \ldots, \asp{e2^{e}_k}\rbrace\rangle$.

      \noindent $H$ bravely respects $o \Leftrightarrow
      \exists A_1 \in AS(B \cup H \cup C_1), \exists A_2 \in AS(B \cup H \cup
      C_2)$ st $A_1$ extends $e_1$, $A_2$ extends $e_2$ and $A_1 \prec_{B \cup
      H} A_2$

      \leftskip=5mm

      $\Leftrightarrow \exists A_1 \in AS(reify(B \cup H \cup C_1, \asp{as1})), \exists A_2 \in
      AS(reify(B \cup H \cup C_2, \asp{as2}))$ st $A_1$ extends $reify(e_1,
      \asp{as1})$, $A_2$ extends $reify(e_2, \asp{as2})$ and $\asp{dominated}$
      is in the unique answer set of $A_1\cup A_2\cup weak(B \cup H) \cup
      \lbrace \asp{lv(l)\ruleend} \mid \asp{l} \in L\rbrace \cup dominates$
      (by Lemma~\ref{lem:dominates})

      $\Leftrightarrow reify(B \cup H \cup C_1, \asp{as1}) \cup
      reify(B \cup H \cup C_2, \asp{as2}) \cup weak(B \cup
      H) \cup \lbrace \asp{lv(l)\ruleend}
      \mid \asp{l} \in L\rbrace \cup dominates)$\\
      \mbox{\hspace{20mm}}$\cup \left\{\begin{array}{l}
          \asp{cov(as1) \codeif as1(e1^{i}_1), \ldots, as1(e1^{i}_m),}\\
            \mbox{\hspace{5mm}}\asp{\naf as1(e1^{e}_1), \ldots, \naf as1(e1^{e}_n)\ruleend}\\
          \asp{\codeif \naf cov(as1)\ruleend}\\
          \asp{cov(as2) \codeif as2(e2^{i}_1), \ldots, as2(e2^{i}_j),}\\
            \mbox{\hspace{5mm}}\asp{\naf as2(e2^{e}_1), \ldots, \naf as2(e2^{e}_k)\ruleend}\\
          \asp{\codeif \naf cov(as2)\ruleend}\\
          \asp{\codeif \naf dominated\ruleend}\\
      \end{array}\right\}$ is satisfiable (we refer to this program as $P_5$
      later in the proof)

      \noindent
      $\Leftrightarrow reify(B \cup H \cup C_1, \asp{as1}) \cup
      reify(B \cup H \cup C_2, \asp{as2}) \cup weak(B \cup H) \cup \lbrace
      \asp{lv(l)\ruleend}
      \mid \asp{l} \in L\rbrace \cup dominates)$\\
      \mbox{\hspace{20mm}}$\cup \left\{\begin{array}{l}
          \asp{cov(as1) \codeif test\_on(ex1_{id}, as2), as1(e1^{i}_1), \ldots, as1(e1^{i}_m),}\\
            \mbox{\hspace{5mm}}\asp{\naf as1(e1^{e}_1), \ldots, \naf as1(e1^{e}_n)\ruleend}\\
          \asp{\codeif \naf test\_on(ex1_{id}, as1), cov(as1)\ruleend}\\
          \asp{cov(as2) \codeif test\_on(ex2_{id}, as2), as2(e2^{i}_1), \ldots, as2(e2^{i}_j),}\\
            \mbox{\hspace{5mm}}\asp{\naf as2(e2^{e}_1), \ldots, \naf as2(e2^{e}_k)\ruleend}\\
          \asp{\codeif test\_on(ex2_{id}, as2), \naf cov(as2)\ruleend}\\
          \asp{\codeif test(o_{id}), \naf dominated\ruleend}\\
      \end{array}\right\}$\\
      \mbox{\hspace{20mm}}$\cup \mathcal{R}_6(E^{+}\cup E^{-}, O^{b} \cup O^{c})$\\
      has an answer set which contains $\asp{test(o_{id})}$ (we refer to this
      program as $P_6$). This follows by the splitting set theorem, using the
      atoms in $\mathcal{R}_6(E^{+}\cup E^{-}, O^{b}\cup O^{c})$ as a splitting
      set, $\lbrace \asp{test(o_{id})}, \asp{test\_on(ex1_{id}, as1)},$
      $\asp{test\_on(ex2_{id}, as2)}\rbrace$ is an answer set of the bottom
      program, leading to $P_5$ as the partially evaluated top program

      \noindent
      $\Leftrightarrow \mathcal{R}(T, H)$ has an answer set which contains
      $\asp{test(o_{id})}$. Again, this is by the splitting set theorem,
      using the atoms in $\mathcal{R}_6(E^{+}\cup E^{-}, O^{b}\cup O^{c})$ as a
      splitting set, as $P_6 \subseteq \mathcal{R}_6(T, H)$ and each of the
      extra rules which are in $\mathcal{R}_6(T, H)$ but not in $P_6$ contain a
      $\asp{test\_on}$ or $\asp{test}$ atom which is not in the answer set
      $\lbrace \asp{test(o_{id})}, \asp{test\_on(ex1_{id}, as1)},$
      $\asp{test\_on(ex2_{id}, as2)}\rbrace$ and hence they are removed from the
      partially evaluated top program

    \leftskip=0mm

  \item Let $o = \langle ex1, ex2\rangle$ be a CDOE in $O^{c}$ st $ex1 =
    \langle e1, C_1\rangle$, $ex2 = \langle e2, C_2\rangle$,  $e_1 = \langle
    \lbrace \asp{e1^{i}_1}, \ldots, \asp{e1^{i}_m}\rbrace, \lbrace
    \asp{e1^{e}_1}, \ldots, \asp{e1^{e}_n}\rbrace\rangle$ and $e_2 = \langle
    \lbrace \asp{e2^{i}_1}, \ldots, \asp{e2^{i}_j}\rbrace, \lbrace
    \asp{e2^{e}_1}, \ldots, \asp{e2^{e}_k}\rbrace\rangle$

      \noindent
      $H$ does not cautiously respect $o \Leftrightarrow \exists A_1 \in AS(B
      \cup H \cup C_1), \exists A_2 \in AS(B \cup H \cup C_2)$ st $A_1$ extends
      $e_1$, $A_2$ extends $e_2$ and $A_1 \not\prec_{B \cup H} A_2$

    \leftskip=5mm

      $\Leftrightarrow \exists A_1 \in AS(reify(B \cup H \cup C_1, \asp{as1})),
      \exists A_2 \in AS(reify(B \cup H \cup C_2, \asp{as2}))$ st $A_1$ extends
      $reify(e_1, \asp{as1})$, $A_2$ extends $reify(e_2, \asp{as2})$ and
      $\asp{dominated}$ is not in the unique answer set of $A_1\cup A_2\cup
      weak(B \cup H) \cup \lbrace \asp{lv(l)\ruleend} \mid \asp{l} \in L\rbrace
      \cup dominates$ (by Lemma~\ref{lem:dominates})

      $\Leftrightarrow reify(B \cup H \cup C_1, \asp{as1}) \cup
      reify(B \cup H \cup C_2, \asp{as2}) \cup weak(B \cup
      H) \cup \lbrace \asp{lv(l)\ruleend}
      \mid \asp{l} \in L\rbrace \cup dominates)$\\
      \mbox{\hspace{20mm}}$\cup \left\{\begin{array}{l}
          \asp{cov(as1) \codeif as1(e1^{i}_1), \ldots, as1(e1^{i}_m),}\\
            \mbox{\hspace{5mm}}\asp{\naf as1(e1^{e}_1), \ldots, \naf as1(e1^{e}_n)\ruleend}\\
          \asp{\codeif \naf cov(as1)\ruleend}\\
          \asp{cov(as2) \codeif as2(e2^{i}_1), \ldots, as2(e2^{i}_j),}\\
            \mbox{\hspace{5mm}}\asp{\naf as2(e2^{e}_1), \ldots, \naf as2(e2^{e}_k)\ruleend}\\
          \asp{\codeif \naf cov(as2)\ruleend}\\
          \asp{\codeif dominated\ruleend}\\
      \end{array}\right\}$ is satisfiable (we refer to this program as $P_7$
      later in the proof)

      $\Leftrightarrow reify(B \cup H \cup C_1, \asp{as1}) \cup
      reify(B \cup H \cup C_2, \asp{as2}) \cup weak(B \cup H) \cup \lbrace
      \asp{lv(l)\ruleend}
      \mid \asp{l} \in L\rbrace \cup dominates)$\\
      \mbox{\hspace{20mm}}$\cup \left\{\begin{array}{l}
          \asp{cov(as1) \codeif test\_on(ex1_{id}, as2), as1(e1^{i}_1), \ldots, as1(e1^{i}_m),}\\
            \mbox{\hspace{5mm}}\asp{\naf as1(e1^{e}_1), \ldots, \naf as1(e1^{e}_n)\ruleend}\\
          \asp{\codeif \naf test\_on(ex1_{id}, as1), cov(as1)\ruleend}\\
          \asp{cov(as2) \codeif test\_on(ex2_{id}, as2), as2(e2^{i}_1), \ldots, as2(e2^{i}_j),}\\
            \mbox{\hspace{5mm}}\asp{\naf as2(e2^{e}_1), \ldots, \naf as2(e2^{e}_k)\ruleend}\\
          \asp{\codeif test\_on(ex2_{id}, as2), \naf cov(as2)\ruleend}\\
          \asp{\codeif test(o_{id}), dominated\ruleend}\\
      \end{array}\right\}$\\
      \mbox{\hspace{20mm}}$\cup \mathcal{R}_6(E^{+}\cup E^{-}, O^{b} \cup O^{c})$\\
      has an answer set which contains the atom $\asp{test(o_{id})}$ (we refer
      to this program as $P_8$). This follows from the splitting set theorem,
      using the atoms in $\mathcal{R}_6(E^{+}\cup E^{-}, O^{b}\cup O^{c})$ as a
      splitting set, $\lbrace \asp{test(o_{id})},
      \asp{test\_on(ex1_{id}, as1)}, \asp{test\_on(ex2_{id}, as2)}\rbrace$ is an
      answer set of the bottom program, leading to $P_7$ as the partially
      evaluated top program

      $\Leftrightarrow \mathcal{R}(T, H)$ has an answer set which contains
      $\asp{test(o_{id})}$. Again, this is by the splitting set theorem,
      using the atoms in $\mathcal{R}_6(E^{+}\cup E^{-}, O^{b}\cup O^{c})$ as a
      splitting set, as $P_8 \subseteq \mathcal{R}_6(T, H)$ and each of the
      extra rules which are in $\mathcal{R}_6(T, H)$ but not in $P_8$ contain a
      $\asp{test\_on}$ or $\asp{test}$ atom which is not in the answer set
      $\lbrace \asp{test(o_{id})}, \asp{test\_on(ex1_{id}, as1)},$ $
      \asp{test\_on(ex2_{id}, as2)}\rbrace$ and hence they are removed from the
      partially evaluated top program.

    \leftskip=0mm

  \end{enumerate}
\end{proof}

$findRelevantExamples(T, H)$ works by constructing $\mathcal{R}(T, H)$ and
computing its answer sets. For each example $ex$, whether of not $ex$ is covered
by $T$ can be computed from the answer sets, using the results in
Theorem~\ref{thm:fre}.  The first example which is not covered is returned. If
no such example is found, $\asp{nil}$ is returned. The correctness of
$findRelevantExamples$ follows directly from Theorem~\ref{thm:fre}. If the task
$T$ is well defined then $\mathcal{R}(T,H)$ will ground finitely (and have a
finite number of answer sets), and therefore solving $\mathcal{R}(T,H)$ for
answer sets will terminate in a finite time; hence as there are a finite number
of examples, $findRelevantExamples$ will terminate in a finite time.


%
%

\end{appendix}

\end{document}